\newtheorem{lemma}{Lemma}
\newtheorem*{lemma*}{Lemma}
\newtheorem*{theorem*}{Theorem}
\newtheorem*{assumption*}{Assumption}
\newtheorem*{corollary*}{Corollary}
\newtheorem*{remark*}{Remark}
\newtheorem*{definition*}{Definition}
\def\eqref#1{equation~\ref{#1}}
\def\1{\bm{1}}
\def\mA{{\bm{A}}}
\def\mR{{\bm{R}}}
\def\mW{{\bm{W}}}
\def\mX{{\bm{X}}}
\def\mY{{\bm{Y}}}
\def\mZ{{\bm{Z}}}
\DeclareMathAlphabet{\mathsfit}{\encodingdefault}{\sfdefault}{m}{sl}
\SetMathAlphabet{\mathsfit}{bold}{\encodingdefault}{\sfdefault}{bx}{n}
\def\gS{{\mathcal{S}}}
\newcommand{\E}{\mathbb{E}}
\newcommand{\R}{\mathbb{R}}
\newcommand{\BB}{\text{BERT}_{\text{BASE}}}
\newcommand{\BL}{\text{BERT}_{\text{LARGE}}}
\newcommand{\TB}{\text{T5}_{\text{BASE}}}
\newcommand{\TL}{\text{T5}_{\text{LARGE}}}
\title{Leveraging redundancy in attention \\with Reuse Transformers}
\author{%
Srinadh Bhojanapalli\parbox{0pt}{$^{*}$},  Ayan Chakrabarti\thanks{The first two authors contributed equally.},  %
Andreas Veit,  Michal Lukasik,  Himanshu Jain,\\%
\bf{}Frederick Liu, Yin-Wen Chang \& Sanjiv Kumar\vspace{0.15em}\\%
Google Research\vspace{0.15em}\\%
\texttt{\footnotesize \{bsrinadh,ayanchakrab,aveit,mlukasik,himj,frederickliu,yinwen,sanjivk\}@google.com}
}
\begin{document}

\maketitle
\newcommand{\insertFigReuse}{
\begin{figure*}[!t]
    \centering
    \includegraphics[width=0.8\textwidth]{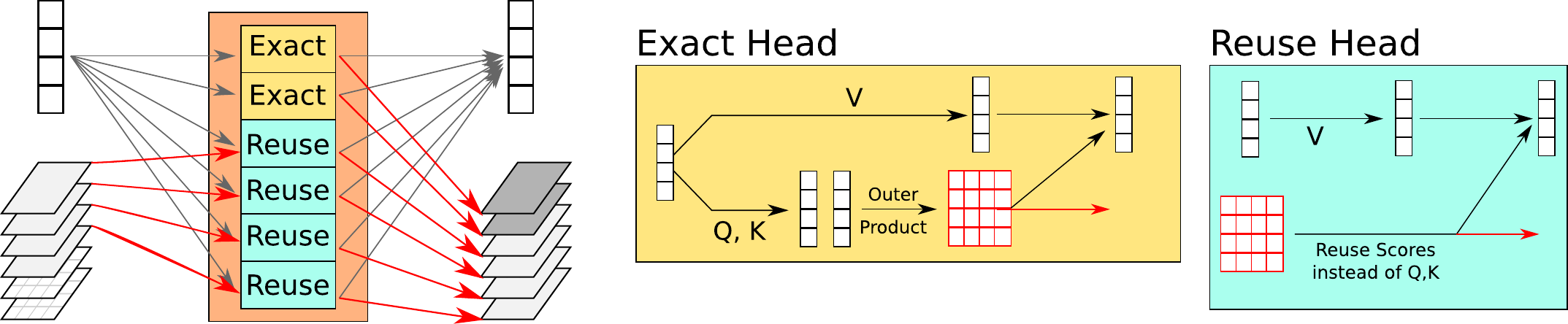}
\caption{\textbf{Reuse Transformer.} We propose a modified architecture for Transformer layers that features a mix of standard ``exact'' and ``reuse'' attention heads, where the latter borrow attention scores computed in previous layers.}\label{fig:reuse}
\end{figure*}
}

\newcommand{\insertFigBestSimilarityModels}{
\begin{figure*}[!t]
    \centering
    \includegraphics[width=0.67\textwidth]{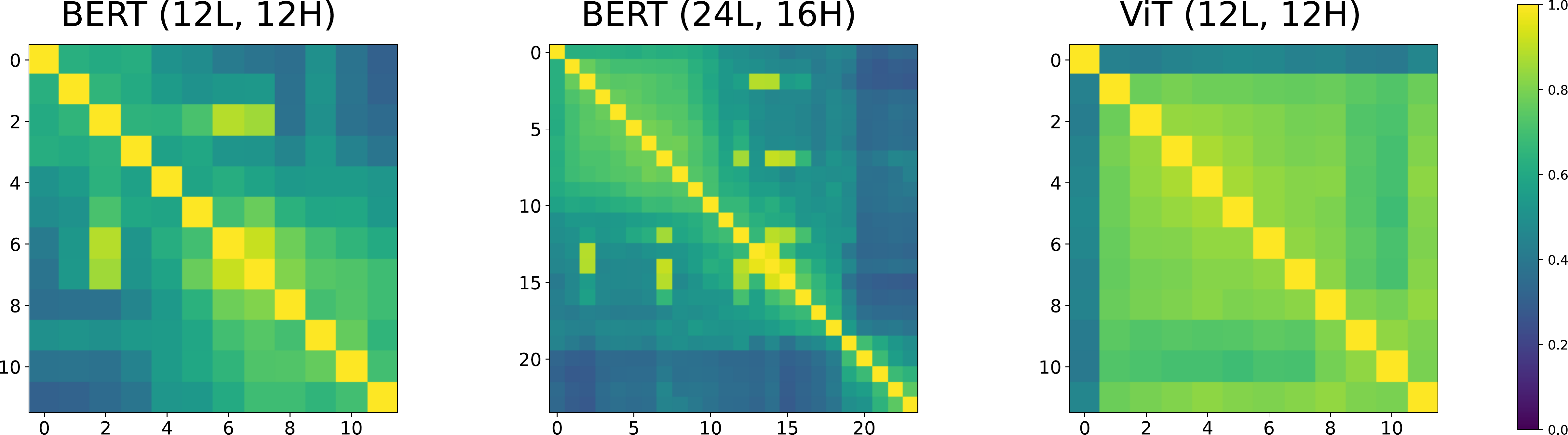}
 \caption{\textbf{All Pairs Similarity.} For all pairs of layers (indexed in x- and y- axis), we visualize the similarity between the best pairs of heads in that pair of layers. We show similarity scores for two BERT models on the Wikipedia dataset and one ViT model on the ImageNet dataset, using scores averaged over 10k examples in all cases.}\label{fig:best_heads_model}
\end{figure*}
}

\newcommand{\insertFigBestSimilarityRandom}{
\begin{figure*}[!t]
    \centering
    \includegraphics[width=0.8\textwidth]{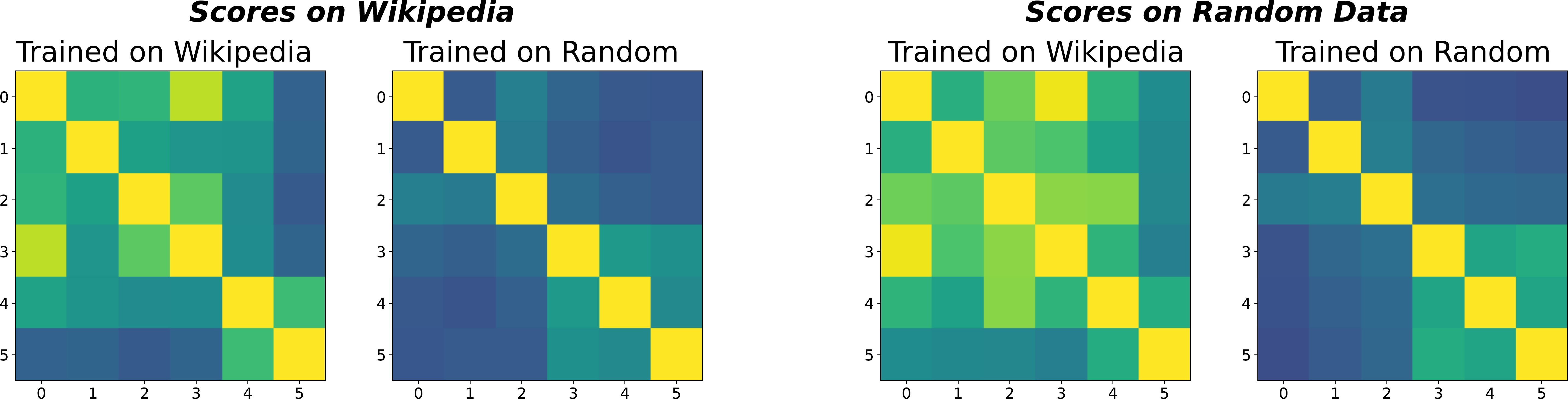}
\caption{\textbf{Role of problem domain.} We visualize the all pairs best head attention similarity scores for (6 layer) Transformer models trained on Wikipedia vs random data. In the first two columns, we compare similarity in attention scores for both models computed over 10k examples from Wikipedia, and in the last two columns over random data. We notice that regardless of which examples are used at test time to compute similarities, the model trained on natural rather than random data exhibits higher similarity.}\label{fig:best_heads_random}
\end{figure*}
}

\newcommand{\insertFigHeadsBertLarge}{
\begin{figure*}[!t]
    \centering
    \includegraphics[width=0.67\textwidth]{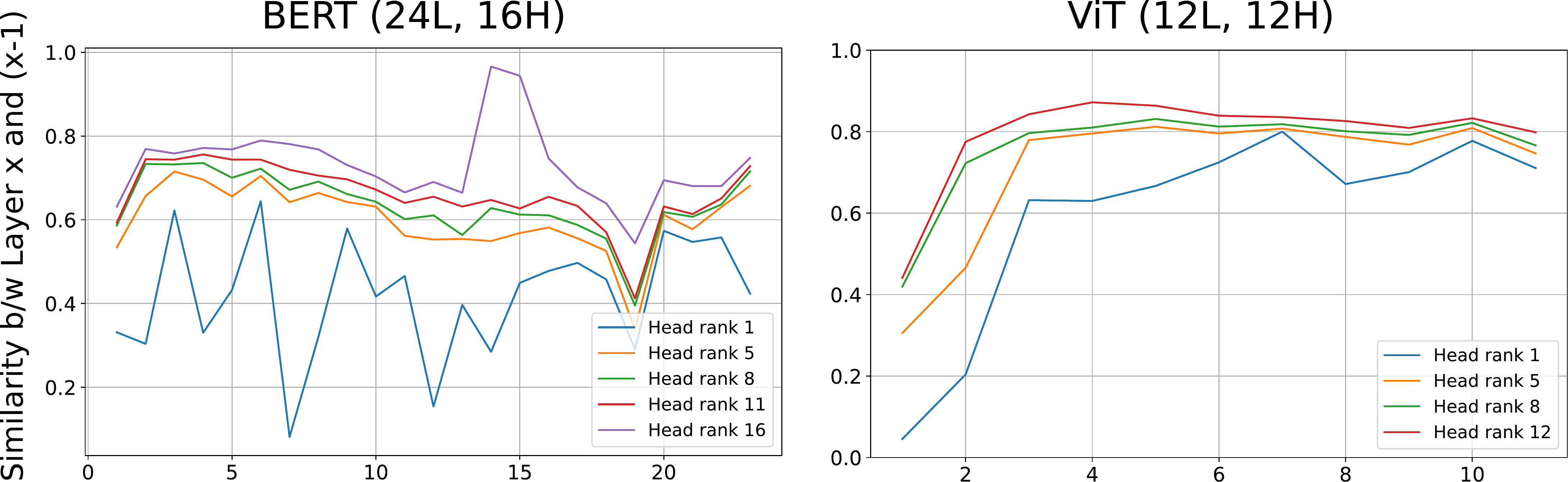}
\caption{\textbf{Sequential Similarity}. We report similairty in adjacent layers for different heads in a (left) 24 layer BERT and (right) 12 layer ViT model. For each layer, we compute similarity for each head with its closest matching head in the previous layer. We then rank heads from lowest (rank 1) to highest similarity, and plot these across layers.}\label{fig:heads_bert_large}
\end{figure*}
}

\newcommand{\insertFigBestSimilarityExamples}{
\begin{figure*}[!t]
    \centering
    \includegraphics[width=0.5\textwidth]{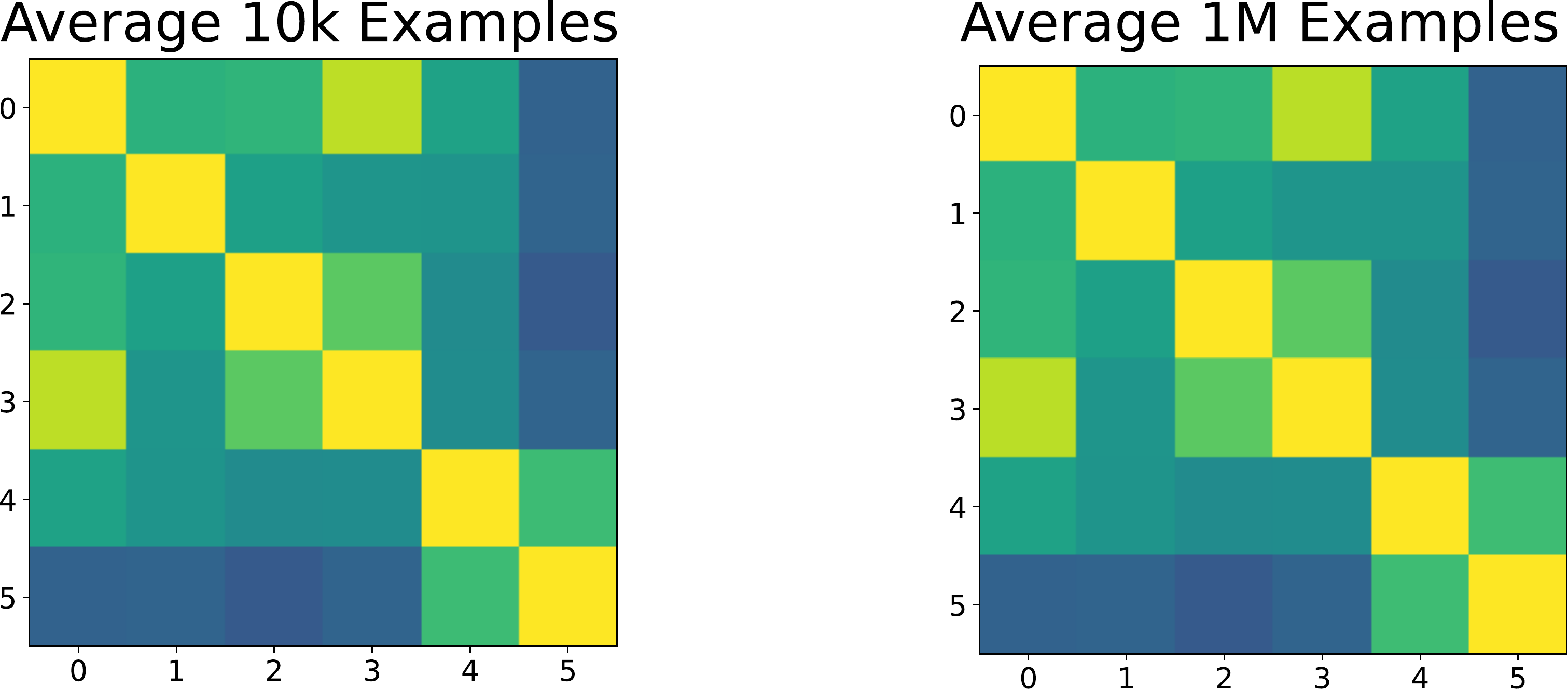}
\caption{\textbf{Number of examples.} We show the all layer pair best head similarity scores for a six layer BERT model, averaged over (left) 10k examples as in the main paper, and over (right) 1M examples. The two averages are essentially identical, showing that our setting of using 10k examples is sufficient to draw conclusions about attention redundancy.}\label{fig:best_heads_6l_examples}
\end{figure*}
}

\newcommand{\insertFigAblation}{
\begin{figure*}[!ht]
    \centering
    \subfigure{
    \includegraphics[scale=0.33]{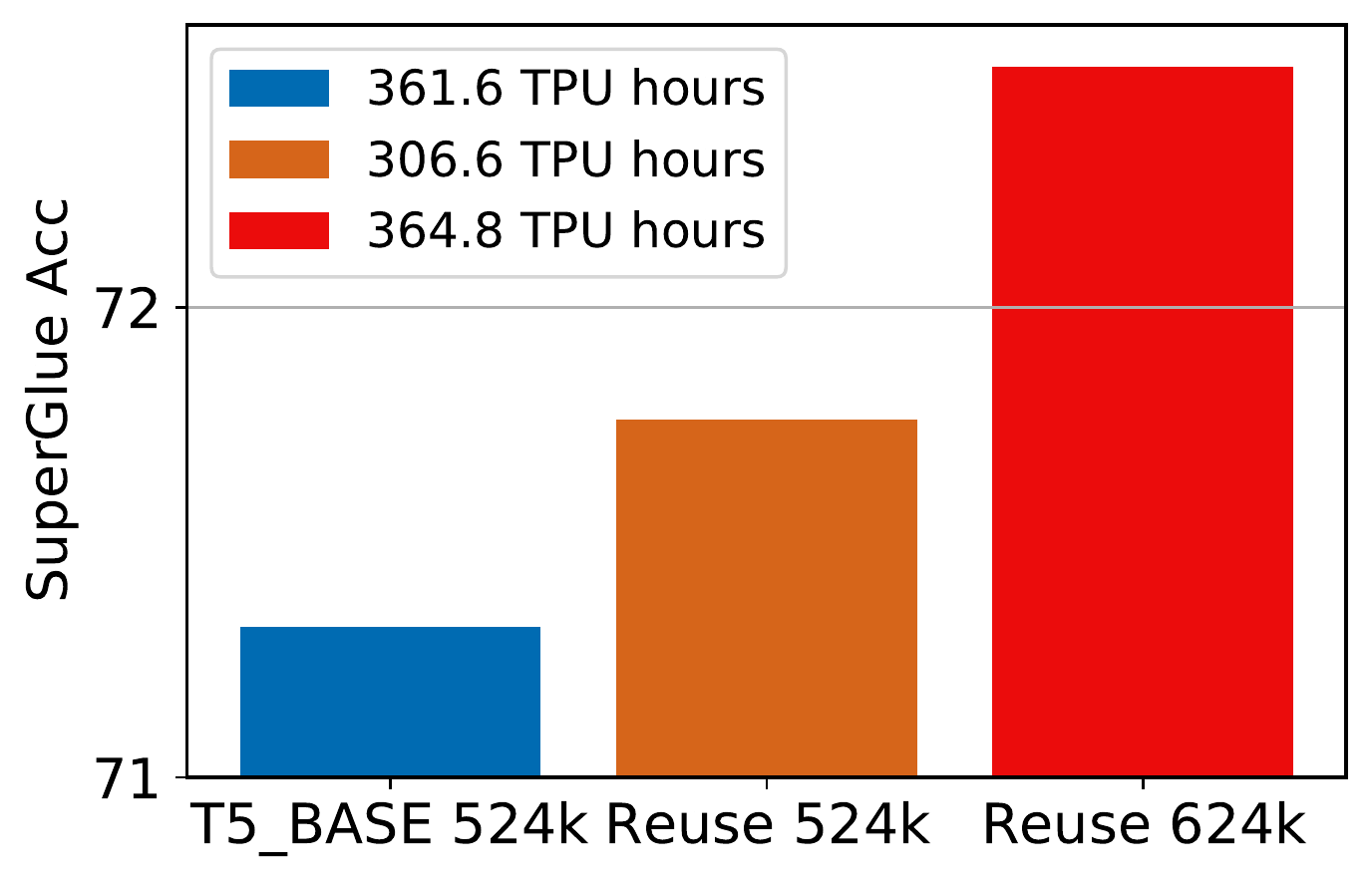}
    }
    \subfigure{
    \includegraphics[scale=0.3]{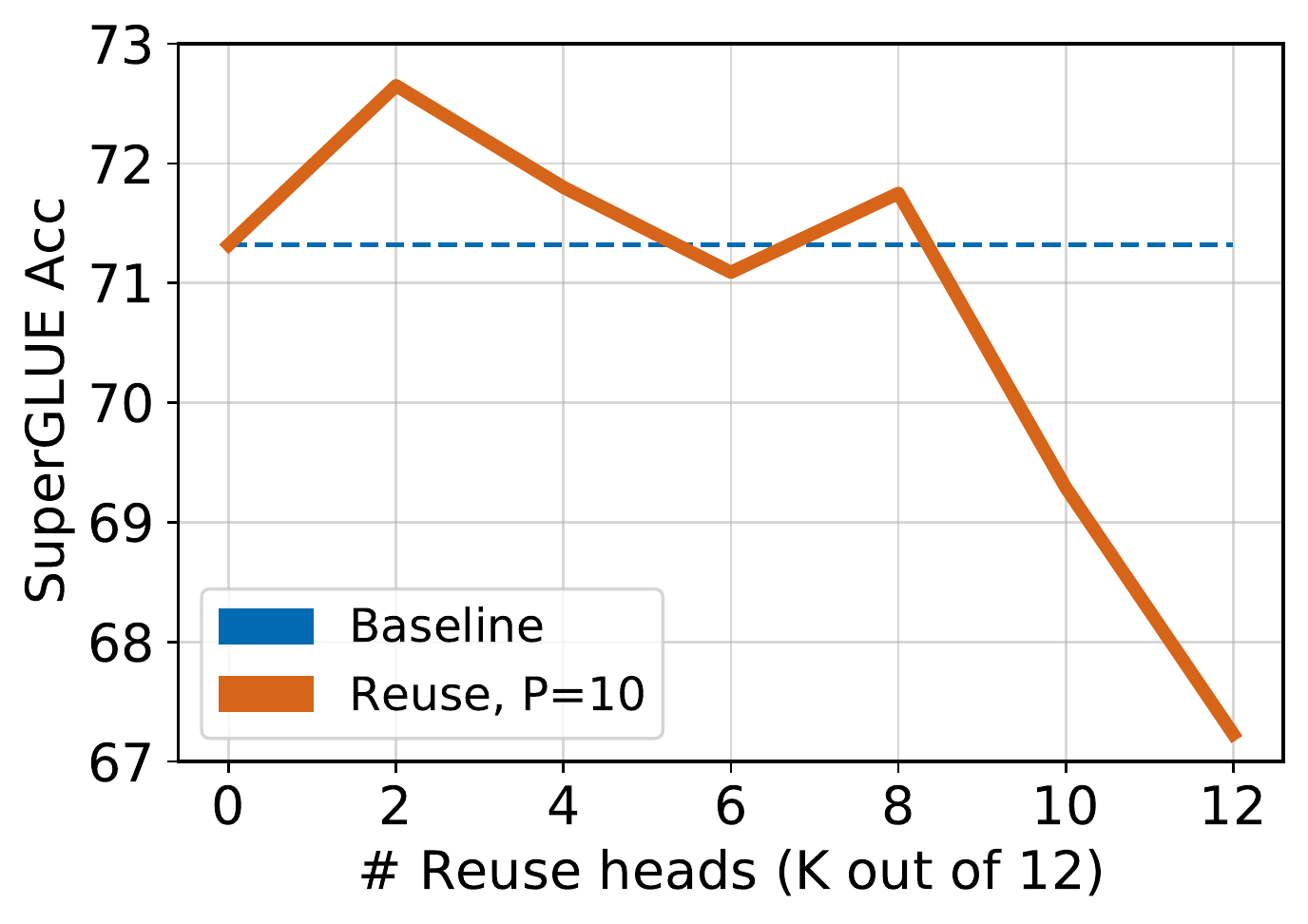}%
    }
    \subfigure{
    \includegraphics[scale=0.3]{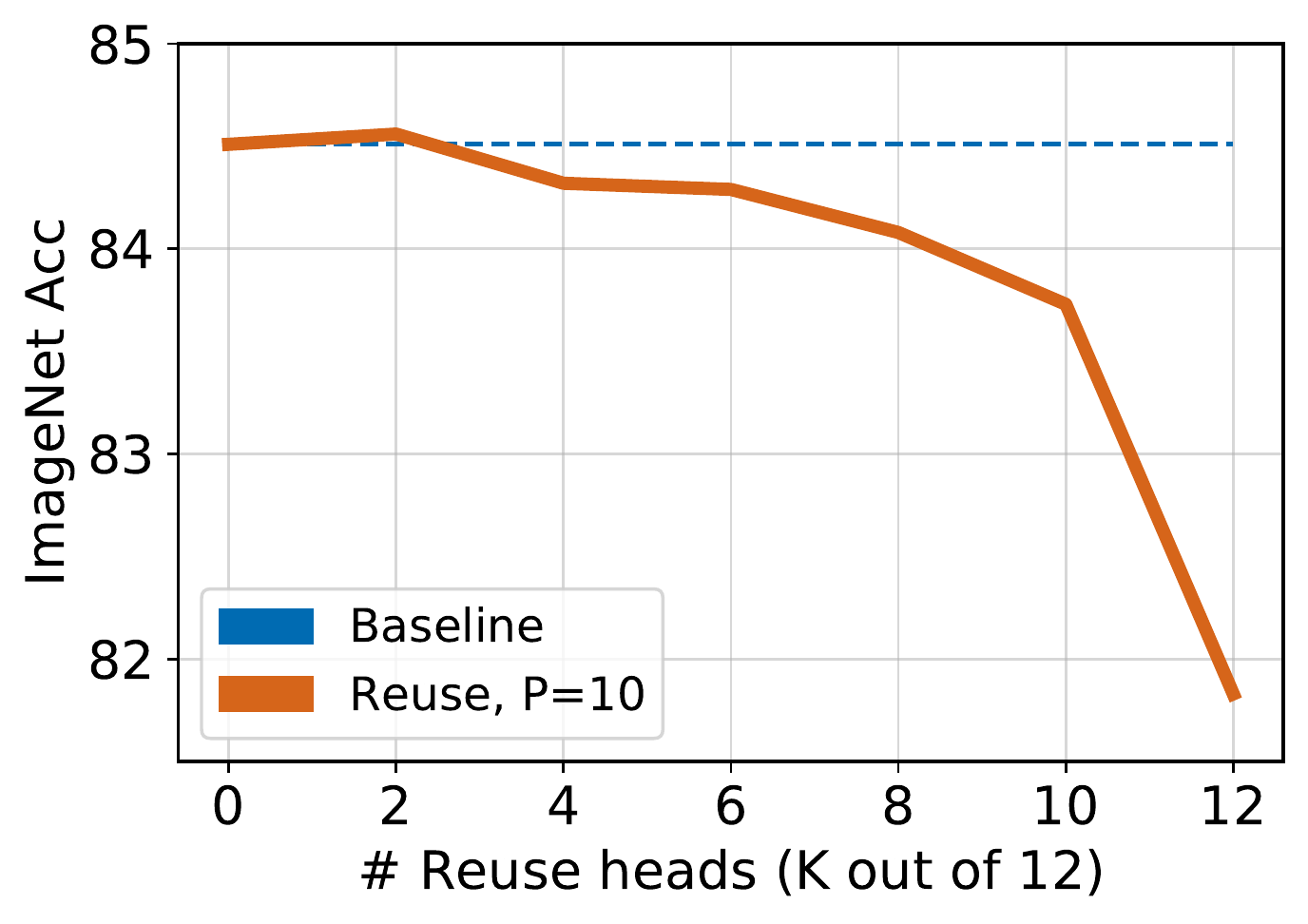}%
    }
\caption{\textbf{Ablation.} Left - Training reuse models for the same amount of time as the $\TB$ results in $>1\%$ improvement on the SuperGLUE benchmark. Middle, Right - Performance of the $\TB$ and ViT models with varying the number of reuse heads on SuperGLUE and ImageNet respectively. We see matching/improved performance with reusing a few heads. Interestingly performance doesn't drop much for ViT models even when $K=12$, inline with the high attention similarity among its layers.}\label{fig:ablation}
\end{figure*}
}

\newcommand{\CC}[1]{\cellcolor{lightgray}#1}

\newcommand{\insertTableBert}{
\begin{table}[ht]
  \caption{\textbf{BERT.} Median pretraining and finetuning performance of BERT models from 3 independent runs. We highlight reuse Transformer cells that match (upto standard deviation) or improve over the baseline. We notice that reusing attention scores results in similar finetuning performance while reducing the computational requirement. We notice that both forms of reusing attention results in similar performance. We also see that matching the baseline in number of parameters (by increasing number of layers from 12/24 to 13/26 for the BASE/LARGE models) results in better performance showing that reusing attention results in better performance scaling.}
  \label{tab:bert_results}
  \centering
  \small
  \begin{tabular}{lcccccccc}
    \toprule
     \bf Model & \bf Reuse & \bf Reuse  &\bf Params &\bf FLOPS & \bf MLM  & \bf MNLI & \bf SQuAD V1.1 &\bf SQuAD V2.0 \\
     & heads ($K$) & layers ($P$) & & & Acc & Acc & F1& F1 \\ \midrule
     $\BB$ - 12H,12L & - & - & 1.0 & 1.0 & {68.91} & $85.32\pm0.05$ & $89.93\pm0.12$ & $79.53\pm0.71$ \\
     Reuse & 6 & 10 & {0.95} & 0.92 & \CC{68.94} & $84.79\pm0.31$ & $89.29\pm0.27$ & $78.49\pm0.01$ \\
     Reuse & 12 & 6 & 0.94 & 0.9 & \CC{68.97} & \CC{$85.27\pm0.2$} & $89.61\pm0.06$ & $78.74\pm0.25$ \\
     Reuse 13L & 12 & 6 & 1.0 & 0.98 & \CC{69.32} & \CC{$85.38\pm0.32$} & \CC{$90.26\pm0.32$} & \CC{$79.55\pm0.62$} \\
     \midrule
     $\BL$ - 16H,24L & - & - & 1.0 & 1.0 & 73.76 & $87.97\pm0.32$ & $91.87\pm0.17$ & $82.41\pm0.38$ \\
     Reuse & 8 & 22 & 0.93 & 0.91 & \CC{73.93} & \CC{$87.6\pm0.26$} & \CC{$91.78\pm0.12$} & \CC{$83.58\pm0.67$} \\
     Reuse & 16 & 12 & 0.92 & 0.9 & 73.64 & \CC{$87.75\pm0.38$} & \CC{$91.92\pm0.33$} & \CC{$82.56\pm0.14$} \\
     Reuse 26L & 16 & 12 & 1.0 & 0.99 & \CC{74.13} & \CC{$88.14\pm0.10$} & \CC{$92.37\pm0.10$} & \CC{$83.38\pm0.35$} \\
    \bottomrule
  \end{tabular}
\end{table}
}

\newcommand{\insertTableTfive}{
\begin{table}[ht]
  \caption{\textbf{T5.} Median performance of T5 models on the GLUE and SuperGLUE finetuning tasks over 3 independent runs. We highlight reuse Transformer cells that match or improve over the baseline. We notice that reusing attention scores leads to an improvement in performance for both base and large models. We also report the relative number of parameters and compute required for all the models. We notice that reusing attention also leads to an improvement in compute and reduction of model parameters.}
  \label{tab:tfive_results}
  \centering
  \begin{tabular}{lccccccc}
    \toprule
     \bf Model & \bf Reuse & \bf Reuse & \bf Params & \bf FLOPS & \bf Steps/Sec & \bf Glue & \bf SuperGlue\\ 
     &  heads ($K$) & layers ($P$)& & & & Average & Average \\
     \midrule
     $\TB$ - 12H,12L & - & -& 1.0 & 1.0 &12.85 & 84.28$\pm$0.15  & 71.32$\pm$0.28\\
     Reuse & 6 & 10 & 0.92 & 0.9 & {13.7} & \CC{84.66$\pm$0.36} & \CC{71.09$\pm$0.15} \\
     Reuse & 12 & 6 & 0.9 & 0.88 &{15.18} & \CC{84.82$\pm$0.17} & \CC{71.76$\pm$0.11} \\
     Reuse 13L & 12 & 6 & 1.0 & 0.96 & {14.12} & \CC{84.47$\pm$0.13} & \CC{72.14$\pm$0.55}\\     \midrule
     $\TL$ - 16H,24L & - & - & 1.0 & 1.0 & 5.68 & 85.28$\pm$0.28 & 74.16$\pm$1.56\\
     Reuse & 16 & 12 & 0.9 & 0.88& {6.65} & \CC{85.52$\pm$0.23} & \CC{73.44$\pm$0.23}\\
     Reuse 26L & 16 & 12 & 1.0 & 0.92& {6.16} & \CC{85.65$\pm$0.23} & \CC{73.81$\pm$1.41}\\
    \bottomrule
  \end{tabular}
\end{table}
}

\newcommand{\insertTableLRA}{
\begin{table}[ht]
  \caption{\textbf{LRA.} Performance of reuse attention models on the Long Range Arena benchmark. We notice that reusing attention leads to sizeable performance improvement while reducing the computational cost.}
  \label{tab:lra_results}
  \centering
  \small
  \begin{tabular}{lccccccc}
    \toprule
     \bf Model & \bf Reuse & \bf Reuse  &  \bf Avg & \bf ListOps & \bf Text	& \bf Retrieval	& \bf Path \\ 
     &  heads ($K$) & layers ($P$)& Acc  & & & &\\ \midrule
     Baseline - 8H,4L & - & - & 58.84$\pm$0.41 & 36.65 & 63.29 & {58.91} & 76.49 \\
     Reuse & 4 & 2 &   \CC{59.47$\pm$0.71} & \CC{37.4} & \CC{64.07} & 58.08  & \CC{78.33} \\
     Reuse & 8 & 2 &  \CC{60.17$\pm$0.46} & \CC{40.2} & \CC{64.07} & {58.08} &  \CC{78.33} \\
     Reuse & 8 & 3 &  \CC{61.26$\pm$1.48} & \CC{39.95} & \CC{63.95} & {58.72} & \CC{82.4} \\
     \bottomrule
  \end{tabular}
\end{table}
}

\newcommand{\insertTableLRAPerformance}{
\begin{table}[!ht]
  \caption{\textbf{Computational savings.} Performance benchmark on Text classification task in the LRA benchmark for input sequence lengths from 1k to 4k. We see that reusing attention scores leads to significant gains both in increased steps per second and reduced memory usage. We indicate the percentage improvement for the 4k input sequence length setting in the parenthesis.}
  \label{tab:lra_performance}
    \small
  \centering
  \begin{tabular}{lcc|cccc|cccc}
    \toprule
     & \bf Reuse& \bf Reuse& \multicolumn{4}{|c|}{\bf Steps/Second} & \multicolumn{4}{|c}{ \bf Peak Mem Usage (GB)} \\
    \bf Model & heads ($K$) & layers ($P$)  & \bf 1K & \bf 2K & \bf 3K & \bf 4K & \bf 1K & \bf 2K  & \bf 3K & \bf 4K \\     \midrule
    Baseline - 8H,4L &- &- & 7.80 & 5.47 & 3.86 & 2.84 & 0.7 & 2 & 4.3 & 7.5 \\
    Reuse & 4 & 2 & 8.25 & 5.83 & 4.37 & 3.24($13.9\%$) & 0.57 & 1.7 & 3.6 & 6.2($17.3\%$) \\
    Reuse & 4 & 3 & 8.52 & 5.72 & 4.41 & 3.28($15.4\%$) & 0.47 & 1.62 & 3.52 & 6.17($17.7\%$) \\
    \bottomrule
  \end{tabular}
\end{table}
}

\newcommand{\insertTableVIT}{
\begin{table}[ht]
  \caption{\textbf{ViT.} ImageNet finetuning accuracy of ViT models pretrained on JFT-300M. }
  \label{tab:vit_results}
  \centering
  \begin{tabular}{lcccccccc}
    \toprule
     \bf Model & \bf Reuse & \bf Reuse  &\bf Params &\bf FLOPS  & \bf Steps/Sec & \bf ImageNet Top-1  \\
     & heads ($K$) & layers ($P$) & & & & Acc\\ \midrule
     ViT - 12H,12L & - & - & 1.0 & 1.0 &  4.02 & 84.51$\pm$0.13 \\
     Reuse & 6 & 10 & 0.93 & 0.92 & 4.28 & 84.29$\pm$0.06 \\
     Reuse & 12 & 6 & 0.92 & 0.90 &  4.51 & 83.55$\pm$0.18 \\
     Reuse 13L & 6 & 10 & 1.01 & 0.99 &3.96 & \CC{84.69$\pm$0.24} \\
    \bottomrule
  \end{tabular}
\end{table}
}

\newcommand{\insertTableTranslate}{
\begin{table}[ht]
  \caption{\textbf{Machine Translation.} Median translation performance (BLEU scores) on Newstest2018 dataset. We notice that reusing attention results in similar performance while saving on the computation and parameters.}
  \label{tab:translate_results}
  \centering
  \small
  \begin{tabular}{lcccccccc}
    \toprule
     \bf Model & \bf Reuse & \bf Reuse  &\bf Params &\bf FLOPS & \bf EN-DE  & \bf DE-EN & \bf EN-CS &\bf CS-EN \\
     & heads ($K$) & layers ($P$) & & &  &  & &  \\ \midrule
     Baseline - 8H,6L & - & - & 1.0 & 1.0 & $39.22\pm0.24$ & $38.37\pm0.09$ & $18.27\pm0.13$ & $23.18\pm0.12$ \\
     Reuse & 2 & 4 & 0.91 & 0.96 & \CC{$39.20\pm0.17$} & $\CC{38.41\pm0.21}$ & $\CC{18.53\pm0.16}$ & $\CC{23.49\pm0.17}$ \\
     Reuse & 4 & 4 & 0.90 & 0.92 & \CC{$39.05\pm0.22$} & \CC{$38.20\pm0.14$} & $\CC{18.16\pm0.19}$ & $\CC{23.32\pm0.11}$ \\
\bottomrule
  \end{tabular}
\end{table}
}

\begin{abstract}
Pairwise dot product-based attention allows Transformers to exchange information between tokens in an input-dependent way, and is key to their success across diverse applications in language and vision. However, a typical Transformer model computes such pairwise attention scores repeatedly for the same sequence, in multiple heads in multiple layers. We systematically analyze the empirical similarity of these scores across heads and layers and find them to be considerably redundant, especially adjacent layers showing high similarity. Motivated by these findings, we propose a novel architecture that reuses attention scores computed in one layer in multiple subsequent layers. Experiments on a number of standard benchmarks show that reusing attention delivers performance equivalent to or better than standard transformers, while reducing both compute and memory usage.
\end{abstract}

\section{Introduction}
Multi-head dot product attention is a key component of Transformer models~\citep{vaswani2017attention}. Each head in each Transformer layer allows aggregating information across different groups of tokens in a sequence, with the aggregation pattern in each head being input-dependent through the use of dot products between the learned query and key projections of token representations. This approach has proven empirically to be remarkably successful, with Transformers delivering state-of-the-art performance across a number of applications in language, vision, and beyond. Given this success, a considerable amount of research has focused on analyzing and interpreting the attention scores computed by trained Transformer models~\citep{ serrano2019attention, jain2019attention, wiegreffe2019attention, clark2019does, rogers2020primer} to gain insight into how attention is useful for inference in specific applications.

Our paper begins with analysis of a different aspect of attention---namely, the redundancy of attention scores computed by a Transformer model. Recently, \citet{bhojanapalli2021eigen} analyzed the variability in attention scores computed from different inputs sampled from a typical data distribution. In this work, we instead focus on the variability of scores across different heads in different layers that are computed for the same input. We perform a systematic analysis looking at the similarities in attention scores computed in different layers, after matching the closest heads for a given pair of layers. These similarities are computed for each typical input example on trained language and vision Transformer models, and then aggregated across a large set of examples from the corresponding training set.

Surprisingly, we find a high degree of similarity between different layers, with adjacent layers in a model exhibiting the most similarity. This suggests that although a standard Transformer model recomputes attention scores multiple times, much of this computation is redundant: the number of \emph{distinct} attention scores used for aggregation is much smaller than the total number of heads across all layers of a typical model. We also show that this is not an inherent characteristic of the Transformer architecture (random models do not produce similar attention scores), and that therefore this redundancy results from the structure of the problem domain.

Motivated by this analysis, we propose the \emph{Reuse Transformer}: a modified Transformer architecture which saves on redundant attention computation to deliver reductions in both compute and memory usage. As illustrated in Figure~\ref{fig:reuse}, a Reuse Transformer layer uses a mix of standard exact computation heads with ``reuse heads'', that borrow attention scores computed in previous layers instead of computing their own via dot products of query-key projections. This Reuse layer can be included in standard Transformer models with different configurations: either by reusing a fraction of the heads in most layers, or by reusing all heads in a fraction of the layers.

Since reuse heads do not use their own query and key projections, this leads to a reduction in the number of model parameters (and associated memory needed to save these parameters and their gradient moments). But unlike other approaches that also reduce parameters by sharing query-key projection parameters in multiple layers~\citep{dehghani2018universal, lan2019albert}, the Reuse Transformer also saves on the actual attention computation---thereby reducing the computational cost during both training and inference, and the memory needed to store intermediate projections.

\insertFigReuse

We evaluate Reuse Transformers in a variety of different settings: including on both language and vision tasks, and on encoder-only models like BERT~\citep{devlin2018bert}, encoder-decoder models like T5~\citep{raffel2020exploring}, and on Vision Transformers (ViT)~\citep{dosovitskiy2020image}. Our experiments consider standard benchmarks where models are trained for specific tasks from scratch --- Machine Translation (WMT 2018)~\citep{rej2018findings} and the Long Range Arena (LRA)~\citep{tay2021long} ---as well as those involving pre-training and finetuning --- GLUE~\citep{wang2019glue}, SuperGlue~\citep{wang2019superglue}, SQuAD~\citep{rajpurkar2016squad} and ImageNet~\citep{deng2009imagenet}. In addition to task performance, we also benchmark wall clock training time and memory usage to confirm that reusing attention translates readily to real world resource savings.

Through this extensive evaluation, we show that reusing attention scores saves compute and memory while yielding equivalent (and sometimes better) performance compared to standard Transformer models. We also show that when models with reuse are augmented with additional layers, to match baseline in terms of parameters, they perform better. Thus Reuse Transformers deliver a better performance and thus provide a better trade-off between resource usage and performance, indicating that attention score reuse is a useful inductive bias in Transformers.

In summary, our contributions in this paper are as follows.
\begin{itemize}
  \item We systematically analyze the similarity of attention computed by different layers of standard Transformer models, and show that attention computed in different layers are very similar.
  \item We develop a novel architecture for Transformer models that reuses attention scores from earlier layers in subsequent layers, thus reducing compute and memory usage.
  \item We evaluate the proposed architecture on a wide variety of baseline models---BERT, T5, ViT--- and benchmarks---GLUE, SuperGlue, SQuAD, ImageNet, WMT, and LRA---showing both actual savings in training time and memory usage, and at the same time, equivalent or better performance than standard Transformer models.
\end{itemize}

\section{Background}

\subsection{Transformer}\label{sec:transformers}

A Transformer encoder layer has two components: 1) a multi-head self-attention layer and 2) a tokenwise feed-forward (MLP) layer. A Transformer decoder layer in addition has a cross-attention layer, with attention between output of the encoder and the input to the decoder.  The input to these models is a sequence of vectors that are usually embeddings of an input token sequence. We let $\mX \in \R^{d \times n}$ denote the input embedding matrix of sequence length $n$ with embedding size $d$. Note that we denote vectors and matrices with small ($x$) and capital ($\mX$) bold letters respectively in this paper. The self-attention layer then updates these embeddings by computing pairwise dot product attention between the input embeddings. Both attention and feed-forward layers use layer normalization and skip connections. 

The self-attention layer computes dot product based attention scores as
\begin{align}\label{eq:attention_scores}
    \mA_{\mZ} = \sigma  \left(\mZ^\top \mW_Q^\top \mW_K \mZ / \sqrt{d} \right),
\end{align}
 where $\mW$ are trainable parameter matrices, and $\mZ$ is the layer input. Here $\sigma$ is a column-wise softmax operator.  Projections $\mW_Q \mZ$ and $\mW_K \mZ$ are usually referred to as query and key projections respectively. The attention scores are then used to linearly combine token embeddings as follows: $\mY= \mA_{\mZ}  \cdot \mZ^\top \mW_V^\top \cdot \mW_0$, where $\mW_V$ and $\mW_0$ are referred to as value and output projections respectively. The output of the attention layer is fed into a tokenwise feedforward layer: $ \mW_2 \phi \left( \mW_1 \mY^\top \right)$, with $\phi$ being an elementwise non-linear activation function such as a ReLU or GELU.  
 
 Multi-head attention involves multiple such trainable attention heads in a single layer, whose outputs are concatenated before multiplication with a common $\mW_0$.

\subsection{Related Work}

Given the key role of the attention in Transformers, several works have analyzed the attention scores computed by these models. Many used probing tasks, such as syntax dependency and coreference resolution, to test the natural language understanding computed by attention layers~\citep{clark2019does, hewitt2019structural,voita2019analyzing,rogers2020primer}. \citet{clark2019does} found that heads in the initial layers of a BERT model tend to attend more broadly, while heads in later layers attend to specific tokens and perform better at language understanding tasks. Following these works, \citet{raganato2020fixed} explored replacing input dependent attention with a combination of fixed attention patterns and one learnable pattern per layer.

Another line of research~\citet{voita2019analyzing,michel2019sixteen} showed that one can prune away many heads in an attention layer of a BERT model, after training. This pruning removes redundant attention heads, but leads to savings only during inference and not training. Through a systematic analysis of the similarity of attention scores computed by different layers, our work proposes a novel architecture that reduces redundant attention computation, leading to compute and memory savings both during training and inference.

It is well known that the cost of attention computation grows quadratically with input sequence length and poses challenges for training Transformers for long sequence length tasks. Several works address this issue by proposing efficient transformers that compute approximations to pairwise dot product attention---e.g. sparse~\citep{child2019generating,kitaev2020reformer,zaheer2020big,yun2020} and linear~\citep{choromanski2020rethinking,peng2020random}. These models reduce the computation and memory usage significantly but typically under-perform standard Transformers. We refer the reader to the survey by \citet{tay2020efficient} for a more detailed discussion. Our work focuses on reusing standard attention computation, and improves efficiency while maintaining performance.

Recently, \citet{bhojanapalli2021eigen} analyzed the variability of attention scores computed by a pre-trained model across different inputs. Finding them to be low-rank, they proposed a partial computation-based approach. While it reduced the cost of attention computation, also led to a drop in performance. Our work focuses on the similarity of attention scores computed in different layers for the same input, leading to a novel efficient architecture that does as well as or better than standard Transformers.

\section{Attention Similarity Analysis}\label{sec:analysis}

We now present an analysis of the similarity of attention scores computed in different layers by Transformer models trained for both language and vision tasks.

\paragraph{Preliminaries.} Recall that each row of an attention score matrix is the output of a soft-max operator and thus lies on a probability simplex (all entries are non-negative and sum to one). Therefore, to compute the similarity between any pair of attention score matrices $\mA$ and $\mA'$, we use a metric based on the total variation (TV) distance between them as:
\begin{align}\label{eq:tv_similarity}
    \gS(\mA, \mA') = 1 - \frac{1}{n} \sum_{p=1}^{n} \textrm{TV}(\mA[p,:], \mA'[p,:]) = 1 - \frac{1}{n} \sum_{p=1}^{n} \frac{1}{2} \|\mA[p,:] - \mA'[p,:]\|_1.
\end{align}
Here $n$ denotes the query sequence length (\# rows of $\mA$), $\|. \|_1$ the $l_1$ norm, and $\mA[p,:]$ the $p$th row of the attention scores matrix $\mA$---corresponding to scores for the $p$th query. Since attention scores form a probability distribution for each query, the total variation distance is always between $0$ to $1$. Hence the similarity also lies in $[0, 1]$.

Using \eqref{eq:tv_similarity} as our metric, we analyze the redundancy in attention scores computed by heads in different layers of a trained Transformer model on typical inputs from a dataset. Given $T$ inputs, we pass each through a given model and let $\mA_{l,h}^t$ denote the scores computed for the $t^{th}$ example in the $h^{th}$ head in layer $l$. Since there is no natural alignment between heads in different layers of a model, we define similarity from a reference layer $l$  and head $h$ to a target layer $l'$, while finding the best target head in that layer. This similarity score $c_{(l, h), l'}$ is defined as:
\begin{align}\label{eq:closest_head}
c_{(l, h), l'} =  \max_{h'} \frac{1}{T}\sum_{t=1}^T \gS(\mA_{l, h}^t, \mA_{l', h'}^t).
\end{align}
Note that we compute the average similarity for each choice of the target $h'$ before computing the best head, i.e., designating a common choice for the target head for all examples.

We use the above definitions of similarity to analyze scores from two BERT~\citep{devlin2018bert} (12 layers and 12 heads, and 24 layers and 16 heads) and one Vision ViT~\citep{dosovitskiy2020image} (12 layers and 12 heads) model (please refer to Section~\ref{sec:experiments} for training details). The similarities are computed for scores from 10k examples from the Wikipedia and ImageNet datasets for the BERT and ViT models respectively. 

\insertFigBestSimilarityModels
\paragraph{Results.} 

We begin by looking at the similarity between all pairs of layers for all three models in Figure~\ref{fig:best_heads_model}, looking at the best matched head in this case---for every pair of layers $(l, l')$, we visualize $\max_h c_{(l, h), l'}$. We find a surprisingly high degree of similarity in attention scores computed in different layers for all the three models. In particular, we see that the similarity between adjacent layers is especially high. Since we consider the best head here, these results imply that there is at least one head that is redundant in all pairs of layers that have high similarity. We also note that the ViT model shows a greater degree of similarity than the BERT models.

\insertFigHeadsBertLarge

For a deeper understanding of similarity across different heads, we restrict ourselves to adjacent layers (where we find similarity to be the highest in Figure~\ref{fig:best_heads_model}), and plot the similarity for different heads in the source layer, not just the best head. We rank heads from lowest (rank 1) to highest similarity, and plot these between successive layers in Figure~\ref{fig:heads_bert_large} for the 24-layer BERT and 12-layer ViT models. We again notice that, for BERT, the best head (rank 16) has a high similarity of around $0.8$ between successive layers. While the worst head has a low similarity ($<0.5$), even rank 5 head has a high similarity of around $0.6$. This suggests that the majority of the heads in a layer compute similar attention as in the previous layer. Some heads do seem to compute novel attention scores that are different from earlier layers.

\paragraph{Role of the Problem Domain.}

A natural question to ask is whether the similarity we observed above is inherent to the Transformer architecture. We evaluate this question both theoretically and empirically. First, we prove that a pair of random heads is expected to produce significantly different attention scores, with low similarity.
\begin{lemma}[Random attention has less similarity]\label{lem:random_attn}
Let the entries of the query and key projection matrices of two heads $\mW_{q1}, \mW_{k1}, \mW_{q2}$ and $\mW_{k2}$ be i.i.d randomly with a zero mean distribution. Let $\tilde{\mA}_1 = \mX^T*\mW_{q1}^T*\mW_{k1}*\mX$ and $\tilde{\mA}_2 = \mX^T*\mW_{q2}^T*\mW_{k2}*\mX$ be the pre-softmax attention scores computed using these matrices for any fixed $\mX$. Then, $$\E[(\tilde{\mA}_1 - \tilde{\mA}_2)_{ij}^2] = 2 \E[(\tilde{\mA}_1)_{ij}^2] = 2 \E[(\tilde{\mA}_2)_{ij}^2].$$
\end{lemma}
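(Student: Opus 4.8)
The plan is to reduce the statement to two elementary facts: that each individual pre-softmax score $(\tilde{\mA}_r)_{ij}$ has zero mean, and that the two heads are stochastically independent. First I would dispose of the second equality, $\E[(\tilde{\mA}_1)_{ij}^2] = \E[(\tilde{\mA}_2)_{ij}^2]$, by symmetry: the tuples $(\mW_{q1},\mW_{k1})$ and $(\mW_{q2},\mW_{k2})$ are identically distributed, so for the fixed input $\mX$ the two heads produce score matrices with the same marginal distribution in every entry. It therefore suffices to establish the first equality.

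Next I would expand the square as
\begin{align*}
\E[(\tilde{\mA}_1 - \tilde{\mA}_2)_{ij}^2] = \E[(\tilde{\mA}_1)_{ij}^2] - 2\,\E[(\tilde{\mA}_1)_{ij}(\tilde{\mA}_2)_{ij}] + \E[(\tilde{\mA}_2)_{ij}^2],
\end{align*}
so the claim reduces to showing the cross term vanishes. The key structural observation is that each entry is bilinear in the query and key parameters: writing $\mathbf{x}_i$ for the $i$th column of $\mX$ and letting $s$ index the projection dimension, one has
\begin{align*}
(\tilde{\mA}_1)_{ij} = \mathbf{x}_i^\top \mW_{q1}^\top \mW_{k1}\, \mathbf{x}_j = \sum_{s,a,b} (\mW_{q1})_{sa}\,(\mW_{k1})_{sb}\,(\mathbf{x}_i)_a\,(\mathbf{x}_j)_b .
\end{align*}
Because within a single head $\mW_{q1}$ and $\mW_{k1}$ are independent with all entries zero-mean, each coefficient satisfies $\E[(\mW_{q1})_{sa}(\mW_{k1})_{sb}] = \E[(\mW_{q1})_{sa}]\,\E[(\mW_{k1})_{sb}] = 0$, and since $\mX$ is deterministic this yields $\E[(\tilde{\mA}_1)_{ij}] = 0$, and identically $\E[(\tilde{\mA}_2)_{ij}] = 0$.

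Finally I would use independence across heads to factor the cross term, $\E[(\tilde{\mA}_1)_{ij}(\tilde{\mA}_2)_{ij}] = \E[(\tilde{\mA}_1)_{ij}]\,\E[(\tilde{\mA}_2)_{ij}] = 0$, and substitute back together with the symmetry equality to obtain $\E[(\tilde{\mA}_1 - \tilde{\mA}_2)_{ij}^2] = \E[(\tilde{\mA}_1)_{ij}^2] + \E[(\tilde{\mA}_2)_{ij}^2] = 2\,\E[(\tilde{\mA}_1)_{ij}^2]$. There is no genuine obstacle here; the only point demanding care is the index bookkeeping that converts the bilinear-in-$(\mW_q,\mW_k)$ form into a zero expectation. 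This is precisely where the hypotheses of independence and zero mean between the query and key projections of a single head are essential, and it is exactly what would break down if the two projections shared parameters or were correlated.
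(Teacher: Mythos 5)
Your proof is correct and takes essentially the same approach as the paper's: expand the square, factor the cross term $\E[(\tilde{\mA}_1)_{ij}(\tilde{\mA}_2)_{ij}]$ using independence between the two heads and the vanishing mean of each head's score, and use identical distributions to obtain the factor of $2$. The only difference is presentational---you make explicit the bilinear expansion showing $\E[(\tilde{\mA}_r)_{ij}]=0$ from within-head independence and zero mean, a step the paper compresses into the phrase ``independent \dots and have zero mean.''
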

This lemma, proved in \S\ref{sec:appx_analysis} of the appendix, implies that even if the token embeddings input to attention heads in two different layers were the same, we would expect to see a large difference in their attention scores.

\insertFigBestSimilarityRandom

Therefore, the similarity that we observe is because the training process converges to models that find it beneficial, for the given inference task, to recompute similar attention scores in subsequent layers. We verify this empirically as well, by comparing a $6$ layer BERT model trained on random data to the standard one trained on Wikipedia in Figure~\ref{fig:best_heads_random}. We evaluate the similarity scores computed by both models on both Wikipedia data (1st and 2nd column) and on random data (3rd and 4th column). We notice that the degree of similarity depends more on the model, i.e., the data it was trained on, than on the data on which the attention scores are being computed. This suggests that the redundancy in attention is a result of training the model for a natural language problem domain.

\section{Reuse Transformer}\label{sec:reuse}

Attention score computation is one of the more expensive operations in a Transformer layer, since it scales quadratically with sequence length. Based on our observation that attention scores computed in different layers of a Transformer model are often redundant, we propose a novel Transformer layer which reuses attention scores computed in previous layers rather than computing them through a dot product of query-key projections. 

\paragraph{Reuse Multihead Attention.} We now describe the Transformer model with our proposed reuse layers in Algorithm~\ref{alg:reuse}. The model is also visually illustrated in Figure~\ref{fig:reuse}. In addition to the standard specification of the total number of layers $L$ and heads per layer $H$, the Reuse Transformer architecture also depends on the choice on the number of layers in which to employ reuse $P <L$, and the number of heads $K < H$ to be reused in every such layer. Note that Alg.~\ref{alg:reuse} describes only the modified attention computation. The remaining steps---query-key projection for exact heads, value projections for all heads and combination with attention scores, feed-forward layers, etc.---are the same as in standard Transformers as described in \S~\ref{sec:transformers}. Further note that this mechanism can also be used for cross-attention layer in the decoder models~\citep{vaswani2017attention}, by interpreting the exact and reuse attention scores in Alg.~\ref{alg:reuse} to refer to the ones from the cross-attention layer.

\begin{algorithm}[!th] 
    \caption{\pmb{Reuse MultiHead Attention}}
	\begin{algorithmic}[1]
	    \State Given: \# Layers $L$ and heads $H$, and reuse layers $P < L$ and reuse heads $K \leq H$.
	    \State Layer 1: Compute attention scores $\mA_{1, h}, \forall h \in [1, H]$. \Comment{First layer is always exact.}
	    \State Set Reuse attention scores $\mR_{1} = [\mA_{1, h \in [1, H]}].$
	    \For{$l = 2, \cdots, P+1$} \Comment{Reuse $K$ heads in the next $P$ layers.}
	        \State Compute attention scores only for $H-K$ heads  $\mA_{l, h \in [1, H-K]}$.
	        \State Reuse attention scores for $K$ heads $\mA_{l, h \in [H-K+1, H]} = \mR_{l-1, h \in [1, K]}$.
	        \State Set Reuse attention scores $\mR_l = [\mA_{l, h \in [1, H-K]}, \mA_{l, h \in [H-K+1, H]} ].$
    	\EndFor
        \For{$l = P+2, \cdots, L$} \Comment{Remaining $L-P-1$ layers are exact.}
	        \State Compute attention scores for all heads $\mA_{l, h}, \forall h \in [H]$.
    	\EndFor
	\end{algorithmic}
	\label{alg:reuse}
\end{algorithm}

We begin by noting from Alg.~\ref{alg:reuse} that attention is computed exactly for all heads in the first layer---since the first layer does not have access to any previous attention scores to reuse---and that we choose to use reuse layers in the first $P$ layers after the first one. Moreover, although we reuse attention scores, we still carry out the remaining steps of a self-attention layer, namely value projection and combining these values weighted by the attention scores. For ablation against variants that reuse different layers and skip attention computation entirely, please refer to \S~\ref{sec:reuseconfigs} in the appendix.

Every layer $l$ in our model outputs both the updated token embeddings as well as a set of $H$ attention scores $\mR_l$. In every layer that is reused, we compute exact attention scores for the first $H-K$ heads. For the remaining $K$ heads, we copy over scores from the first $K$ heads from $\mR_{l-1}$. The layer then passes on this new set of $H$ attention scores $\mR_l$ as input to layer $(l+1)$. Note that since we stack the new exact attention heads to the top of $\mR_l$ and retain scores from the first few heads of $\mR_{l-1}$, every reuse layer effectively uses the set of most recently computed $H$ scores.

Reusing attention scores in this way is a reasonable approximation given our empirical observation that attention scores in trained models are redundant. We analyze this formally in the context of a simplified two-layer linear Transformer model (similar to \citep{NEURIPS2020_ff4dfdf5}), and show that in the presence of high attention similarity, reusing attention generates similar outputs. We present a informal version of our result here, and include the more complete statement and proof in \S~\ref{sec:appx_analysis} in the appendix.
\begin{lemma}[Informal version of Lemma~\ref{lem:reuse_error}]
Let attention computed by the two layers be $\epsilon$ apart for all inputs $\mX$, i.e $ \|\mA_1 -\mA_2\| \leq \epsilon$. Then, there exists a reuse Transformer, such that the error in the outputs scales as $O(\epsilon).$
\end{lemma}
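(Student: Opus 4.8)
The plan is to realize the informal statement inside the simplified two-layer linear Transformer, in which the token-wise feed-forward activation $\phi$ is taken to be the identity (the ``linear'' regime, as in \citep{NEURIPS2020_ff4dfdf5}) and layer normalization, if retained, is handled through its Lipschitz constant. I would write the exact model explicitly: the first layer maps the input $\mX$ to an intermediate representation $\mZ_1$ using attention scores $\mA_1 = \sigma(\mX^\top \mW_{Q,1}^\top \mW_{K,1} \mX / \sqrt{d})$, and the second layer maps $\mZ_1$ to the output using $\mA_2 = \sigma(\mZ_1^\top \mW_{Q,2}^\top \mW_{K,2} \mZ_1 / \sqrt{d})$. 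The reuse Transformer is built from the \emph{same} weights, except that its second layer substitutes the reused scores $\mA_1$ for the recomputed $\mA_2$; this is precisely the $P=1$, $K=H$ instance of Algorithm~\ref{alg:reuse}. The hypothesis to invoke is the assumed bound $\|\mA_1 - \mA_2\| \le \epsilon$.

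The key observation, and the reason the two-layer case is clean, is that the first layer is exact in \emph{both} models, so the intermediate representation $\mZ_1$ is identical in the exact and reuse Transformers. Hence every computation up to the second-layer attention coincides, and the entire output discrepancy is localized to the attention substitution; in particular, no error compounds across layers. Writing the second layer's attention output as $\mA \cdot \mZ_1^\top \mW_{V,2}^\top \mW_{0,2}$ (plus a skip connection that is identical in both models and therefore cancels in the difference), the output error factors as $(\mA_2 - \mA_1)\cdot \mZ_1^\top \mW_{V,2}^\top \mW_{0,2}$, then transformed by the linear feed-forward map. Submultiplicativity of a compatible matrix norm gives an upper bound of $\|\mA_2 - \mA_1\|$ times the operator norm of the value/output/feed-forward transformation applied to $\mZ_1$, i.e.\ an expression of the form $\epsilon \cdot C$.

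The main work, and the only genuine obstacle, is showing that the prefactor $C$ is a true constant: uniform over inputs and independent of $\epsilon$. I would bound $C$ by the product of the operator norms $\|\mW_{V,2}\|\,\|\mW_{0,2}\|$ and the feed-forward weights, times $\|\mZ_1\|$, and then control $\|\mZ_1\|$ in terms of $\|\mX\|$ using the fact that each row of $\mA_1$ is a probability vector, so the attention map is a row-stochastic averaging operator that does not inflate norms, together with the Lipschitz constant of layer normalization for the residual update. Under a boundedness assumption on $\mX$ and the weight matrices---which the formal Lemma~\ref{lem:reuse_error} should state explicitly---this yields $\|\mZ_1\| \le C'$ and hence $C < \infty$ uniformly, so the output error is $O(\epsilon)$. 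The crucial role of linearity is that it prevents the substituted attention error from being amplified nonlinearly: with a genuine nonlinear $\phi$ one would obtain only a Lipschitz-in-$\epsilon$ estimate rather than the exact factorization above, which is exactly why the analysis is carried out in the linear regime.
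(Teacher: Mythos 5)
Your proposal is correct, but it uses a genuinely different construction from the paper's. You keep every weight of the exact model and simply substitute $\mA_1$ for $\mA_2$ in the second layer (the natural $P=1$, $K=H$ reuse), so the first layers of the two models coincide, $\mZ_1$ is shared, and the error is localized entirely to the substitution; in the paper's notation this gives $\widehat{\mY}-\mY = (\mA_1-\mA_2)\mX\mW_2 + (\mA_1-\mA_2)\mA_1\mX\mW_1\mW_2$, hence a bound of $\epsilon\left(1+\|\mA_1\|_2\right)\le 2\epsilon$ under the paper's norm assumptions. The paper instead builds a reuse model whose single attention matrix is the \emph{average} $\widehat{\mA}=\tfrac{1}{2}(\mA_1+\mA_2)$ (realized by averaging the query--key products) with $\mW_3=\mW_1$, $\mW_4=\mW_2$; its first layer therefore also differs from the exact model's, the error splits into three terms, a commutator identity $\mA_2\mA_1-\mA_1\mA_2=\mA_1(\mA_1-\mA_2)+(\mA_2-\mA_1)\mA_1$ is needed for the quadratic term, and the final bound is $2\epsilon+\tfrac{\epsilon^2}{2}$. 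Your route buys simplicity and a bit more: a slightly tighter bound with no $\epsilon^2$ term, no need for the paper's realizability argument that a suitable $\widehat{\mW}_q^\top\widehat{\mW}_k$ exists, and it sidesteps a wrinkle in the paper's construction---averaging the pre-softmax products only yields $\widehat{\mA}=\tfrac{1}{2}(\mA_1+\mA_2)$ for the \emph{post}-softmax scores if one ignores the softmax, since softmax does not commute with averaging, whereas reusing $\mA_1$ exactly is realizable by copying the first layer's query/key weights, softmax included. What the paper's symmetric choice buys is that the reused scores are simultaneously $\epsilon/2$-close to both layers' attention, matching the interpretation of one set of scores serving two layers equally well. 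One caveat you share with the paper: bounding $\|\mA_1\|_2\le 1$ because $\mA_1$ is row-stochastic is valid for the $\ell_\infty$ operator norm but not for the spectral norm (a row-stochastic matrix can have spectral norm as large as $\sqrt{n}$), so strictly speaking both your constant and the paper's should either switch norms or retain an explicit factor of $\|\mA_1\|_2$.
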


\paragraph{Configurations.} The parameters $P, K$ control how much attention computation is reused in the Transformer model, reducing the number of attention computations from $L*H$ heads to $(L*H - P*K)$. Note that for a given reuse budget ($P*K$) there are many ways of choosing parameters $P$ and $K$. We consider two different settings for our experiments. 

\begin{itemize}
\item \textbf{Partial layer Reuse.}
In this setting, we always set the number of reuse layers $P$ to be $L-2$ and vary $K$, such that all heads of the first and last layer compute attention scores, and rest of the layers reuse $K$ heads. In this architecture, every layer has atleast one head (when $K < H$) that computes attention scores.
\item \textbf{Full layer Reuse.}
In this setting we always set $K$ to be $H$ and vary $P$, i.e., attention is not computed in $P$ layers of the model and is reused from the earlier layer. Note that we need to again set the first layer exact to be able to reuse the attention scores in the following layers.
\end{itemize}

\paragraph{Computational Complexity.} Reusing the attention scores reduces both memory and computation of attention layer as heads that reuse attention scores do not have to compute the query and key projections as well. Thus the model reduces the attention score computation cost in each layer from $H\cdot n^2$ to $(H-K) \cdot n^2$, for input sequence length $n$ with $K$ heads being reused. This reduces the overall computational complexity of the multihead attention layer from $4\cdot d^2 \cdot n + 2 \cdot d \cdot n^2$ to $(1 -  \frac{K}{2H}) \cdot \left[4\cdot d^2 \cdot n + 2 \cdot d \cdot n^2\right]$. Similarly this reduces the number of parameters from $4 \cdot d^2$ to $(1 -  \frac{K}{2H}) \cdot 4 \cdot d^2$. 

\section{Experiments}\label{sec:experiments}

In this section we present experiments to show the advantage of reusing attention scores in reducing computational costs while matching or improving the performance of Transformers. We consider two different settings for our experiments 1) pre-training followed by finetuning, 2) training from scratch. For the first setting we consider BERT~\citep{devlin2018bert}, T5~\citep{raffel2020exploring} and ViT~\citep{dosovitskiy2020image} models. For the second setting we consider Machine Translation on WMT2018~\citep{rej2018findings} and the Long Range Arena (LRA) benchmark developed to test Transformers on long sequence length tasks in multiple domains~\citep{tay2021long}. Note that our setup includes encoder-only as well as encoder-decoder models. In both these settings we will see that reusing attention not only reduces computation but can also improves performance in some settings. Moreover, we will show that, one can always achieve better performance by matching the number of parameters of the reuse Transformer with the standard Transformer. We use the publicly available implementations, with the \textbf{same hyperparameters} for all the experiments for a task, and report them in detail in Appendix~(\S~\ref{sec:appx_setup}). 

\subsection{Experimental Setup}
We first describe our experimental setup for all the tasks.

\textbf{BERT}. BERT models are Transformers pre-trained with Masked Language Modeling (MLM) objective on Wikipedia and Books datasets~\citep{devlin2018bert}. These are encoder only models with bi-directional attention across input tokens. We follow a similar pre-training and finetuning recipe as BERT. We report the finetuning results on the MNLI~\citep{mnli} and SQuAD V1.1/V2.0~\citep{rajpurkar2016squad} tasks in Table~\ref{tab:bert_results}. 

We consider two models i.e., $\BB$ with 12 layers, 12 heads and $\BL$ with 24 layers, 16 heads. We consider two different settings for reusing attention scores: 1) partial layer---reusing $6/8$ heads per layer, 2) full layer---reusing scores in the beginning $6/12$ layers (Algorithm~\ref{alg:reuse}) in the $\BB$ and $\BL$ models respectively. Note that for both the settings the first layer computes attention scores for all heads as described in Section~\ref{sec:reuse}. Further we find it useful to have the last layer also compute attention scores for all heads in the first setting. In addition we also consider reuse transformers with more layers (Reuse 13L and 26L), that match the parameters of the baseline models.

\textbf{T5.} T5 models, unlike BERT, are encoder-decoder Transformer models that are pretrained on a similar objective as BERT models but on the C4 dataset~\citep{raffel2020exploring}. With a unified text to text framework, these models are shown to generalize easily to a variety of finetuning tasks. We consider the finetuning tasks from GLUE~\citep{wang2019glue} and SuperGLUE~\citep{wang2019superglue} benchmarks for our experiments. Similar to the BERT experiments we consider models of two different sizes---$\TB$ and $\TL$ with $12$ and $24$ layers respectively per encoder and decoder. Note that we reuse the attention in both the encoder and decoder, including in the cross attention layer. 

\textbf{ViT.} We next consider experiments with {Vision Transformers}~\citep{dosovitskiy2020image}. These models are pre-trained on JFT-300M~\citep{sun2017revisiting}, a dataset with 300M images and around 375M labels, and are finetuned on ImageNet. We consider the ViT-Base model, a 12 layer Transformer model with 12 heads per layer. For finetuning we use a resolution of 384x384 and a patch size of 16 resulting in a sequence length of 576 tokens per image. We consider again both partial and full layer reuse settings. 

\textbf{Machine Translation.} We now consider the experiments on Machine Translation. For training we use two language pairs in both directions from WMT 2018~\citep{rej2018findings}---English, German (en-de and de-en), and English, Czech (en-cs and cs-en). We report the test performance on Newstest 2018 datasets computed using the SacreBLEU~\citep{post2018call}. We use a baseline Transformer model with 6 layers per encoder and decoder with 8 attention heads per layer following~\citet{vaswani2017attention}.

\textbf{LRA.} Finally, we consider the {Long Range Arena} benchmark~\citep{tay2021long} developed to test Transformers on long sequence length tasks with input sequence lengths ranging from 1k to 8k. It consists of five tasks covering logical, textual and vision data. All tasks use a 4 layer Transformer model with the exception of Image classification task, that uses a 1 layer model. Hence, we do not report results on this task. We consider three different reuse settings.

\subsection{Results}
We now present our experimental results.

\setlength{\tabcolsep}{4pt}
\insertTableBert
\textbf{BERT.} We report the median finetuning results over 3 independent runs in Table~\ref{tab:bert_results}. In addition to pretraining and finetuning metrics, we also report the relative computation cost required for each model. We first notice that models that reuse attention scores require lesser computation and parameters. Interestingly reusing attention scores results in similar performance on both pre-training and finetuning tasks. Further partial and full layer reuse have similar average performance. Finally the $13$ and $26$ layer reuse models achieve the best performance while having the same number of parameters as the $\BB$ and $\BL$ models respectively.

\insertTableTfive
\textbf{T5.} We again report the median finetuning results over 3 independent runs on the GLUE and SuperGLUE benchmarks in Table~\ref{tab:tfive_results}. We report the average scores across all tasks. We first notice that, interestingly, reusing attention results in better performance over $\TB$ and matches performance of $\TL$, while saving on computation resources. This is observed for both forms of reusing attention. In addition to model parameters and FLOPS we also report the training wall clock time, in terms of steps per second for these models on TPUv3 with 32 chips for $\TB$ and 64 chips $\TL$ respectively. We notice that reusing attention leads to substantial speedups while improving the performance. Finally we also consider 13/26 layer reuse models that have the same number of parameters as the baseline, and achieve the best performance.

\insertTableVIT
\textbf{ViT.} We present the results in Table~\ref{tab:vit_results}. We notice again that reusing attention leads to similar performance as baseline with reduction in parameters and compute. Interestingly, partial reuse has better performance over full layer reuse. Further increasing the reuse model size (13L), to match the baseline size, results in an improved performance.

\insertTableTranslate
\textbf{Machine Translation.} We report the median BLEU scores in Table~\ref{tab:translate_results}. We see again that reusing attention scores leads to similar performance as the baseline while saving on computational resources.

\insertTableLRA
\textbf{LRA.} We notice in Table~\ref{tab:lra_results} that reusing attention scores leads to better performance for the tasks in LRA benchmark. Interestingly we see better performance when more heads/layers reuse attention scores. We attribute this to the regularization effect of reusing the attention scores. Note that since some tasks only use a model with only 4 attention heads, the results for reusing 8 attention heads are the same as the ones with 4 attention heads.

\subsection{Ablation}
\insertFigAblation
In this section we present ablation results with the reuse Transformers (Figure~\ref{fig:ablation}). We first train the reuse Transformer that reuses 6/12 attention layers ($P=6, K=12$) for the same amount of time as the baseline and notice significant performance gains ($>1\%$) over the $\TB$ model on the SuperGLUE benchmark (left). We next present an ablation varying the number of reuse heads ($K$), with $P=10$, for the 12 layer $\TB$ (middle) and ViT (right) models. We notice that reusing a few attention heads improves performance. Interestingly, even when reusing all the heads, the drop in performance is not much for the ViT models.

\subsection{Computational savings}
\insertTableLRAPerformance
In this section we present comparison of models compute (steps per second) and memory usage during training for different sequence lengths on the text classification task in the LRA benchmark in Table~\ref{tab:lra_performance}. We use a batch size of 32 and train on TPUv3 with 16 chips. The baseline model has 4 layers with 4 attention heads per layer. We consider two different reuse settings. Both the settings show improvement in both speed and memory usage over the baseline with the best model achieving $15.4\%$ speedup and $17.7\%$ reduction in memory usage.

\section{Conclusion}
In this paper, we analyzed the similarity in attention scores computed at different layers of a Transformer model, and discovered them to be substantially redundant. Based on this observation, we proposed a new approach for reducing the compute and memory usage of Transformer models, both during training and inference, by reusing attention scores across layers. As our extensive experiments showed, this improved efficiency was borne out in terms of actual training speed and memory usage, and came with performance equivalent to or better than standard Transformers. More broadly, our work shows that developing a better understanding of the empirical behavior of state-of-the-art models can yield real dividends---in this case, in the form of a new architecture with improved performance-efficiency trade-offs.

\section{Reproducibility Statement}
All the experiments in this paper are done with models that have publicly available code. We have used the same default hyperparameters for all our experiments for a given task. We further list them in Appendix~\ref{sec:appx_setup}. Proposed reuse attention (Algorithm~\ref{alg:reuse}) is a simple modification to standard attention layers. 
\bibliography{references}

\begin{thebibliography}{34}
\providecommand{\natexlab}[1]{#1}
\providecommand{\url}[1]{\texttt{#1}}
\expandafter\ifx\csname urlstyle\endcsname\relax
  \providecommand{\doi}[1]{doi: #1}\else
  \providecommand{\doi}{doi: \begingroup \urlstyle{rm}\Url}\fi

\bibitem[Bhojanapalli et~al.(2021)Bhojanapalli, Chakrabarti, Jain, Kumar,
  Lukasik, and Veit]{bhojanapalli2021eigen}
Srinadh Bhojanapalli, Ayan Chakrabarti, Himanshu Jain, Sanjiv Kumar, Michal
  Lukasik, and Andreas Veit.
\newblock Eigen analysis of self-attention and its reconstruction from partial
  computation.
\newblock \emph{arXiv preprint arXiv:2106.08823}, 2021.

\bibitem[Bojar et~al.(2018)Bojar, Federmann, Fishel, Graham, Haddow, Huck,
  Koehn, and Monz]{rej2018findings}
Ond{\v{r}}ej Bojar, Christian Federmann, Mark Fishel, Yvette Graham, Barry
  Haddow, Matthias Huck, Philipp Koehn, and Christof Monz.
\newblock Findings of the 2018 conference on machine translation (wmt18).
\newblock In \emph{Proceedings of the Third Conference on Machine Translation},
  volume~2, pp.\  272--307, 2018.

\bibitem[Child et~al.(2019)Child, Gray, Radford, and
  Sutskever]{child2019generating}
Rewon Child, Scott Gray, Alec Radford, and Ilya Sutskever.
\newblock Generating long sequences with sparse {T}ransformers.
\newblock \emph{arXiv preprint arXiv:1904.10509}, 2019.

\bibitem[Choromanski et~al.(2020)Choromanski, Likhosherstov, Dohan, Song, Gane,
  Sarlos, Hawkins, Davis, Mohiuddin, Kaiser, et~al.]{choromanski2020rethinking}
Krzysztof Choromanski, Valerii Likhosherstov, David Dohan, Xingyou Song,
  Andreea Gane, Tamas Sarlos, Peter Hawkins, Jared Davis, Afroz Mohiuddin,
  Lukasz Kaiser, et~al.
\newblock Rethinking attention with performers.
\newblock \emph{arXiv preprint arXiv:2009.14794}, 2020.

\bibitem[Clark et~al.(2019)Clark, Khandelwal, Levy, and Manning]{clark2019does}
Kevin Clark, Urvashi Khandelwal, Omer Levy, and Christopher~D Manning.
\newblock What does {BERT} look at? an analysis of {BERT}'s attention.
\newblock \emph{arXiv preprint arXiv:1906.04341}, 2019.

\bibitem[Dehghani et~al.(2018)Dehghani, Gouws, Vinyals, Uszkoreit, and
  Kaiser]{dehghani2018universal}
Mostafa Dehghani, Stephan Gouws, Oriol Vinyals, Jakob Uszkoreit, and {\L}ukasz
  Kaiser.
\newblock Universal {T}ransformers.
\newblock \emph{arXiv preprint arXiv:1807.03819}, 2018.

\bibitem[Deng et~al.(2009)Deng, Dong, Socher, Li, Li, and
  Fei-Fei]{deng2009imagenet}
Jia Deng, Wei Dong, Richard Socher, Li-Jia Li, Kai Li, and Li~Fei-Fei.
\newblock Imagenet: A large-scale hierarchical image database.
\newblock In \emph{2009 IEEE conference on computer vision and pattern
  recognition}, pp.\  248--255. Ieee, 2009.

\bibitem[Devlin et~al.(2018)Devlin, Chang, Lee, and Toutanova]{devlin2018bert}
Jacob Devlin, Ming-Wei Chang, Kenton Lee, and Kristina Toutanova.
\newblock {BERT}: Pre-training of deep bidirectional {T}ransformers for
  language understanding.
\newblock \emph{arXiv preprint arXiv:1810.04805}, 2018.

\bibitem[Dosovitskiy et~al.(2021)Dosovitskiy, Beyer, Kolesnikov, Weissenborn,
  Zhai, Unterthiner, Dehghani, Minderer, Heigold, Gelly,
  et~al.]{dosovitskiy2020image}
Alexey Dosovitskiy, Lucas Beyer, Alexander Kolesnikov, Dirk Weissenborn,
  Xiaohua Zhai, Thomas Unterthiner, Mostafa Dehghani, Matthias Minderer, Georg
  Heigold, Sylvain Gelly, et~al.
\newblock An image is worth 16x16 words: Transformers for image recognition at
  scale.
\newblock In \emph{International Conference on Learning Representations, ICLR},
  2021.

\bibitem[Hewitt \& Manning(2019)Hewitt and Manning]{hewitt2019structural}
John Hewitt and Christopher~D Manning.
\newblock A structural probe for finding syntax in word representations.
\newblock In \emph{Proceedings of the 2019 Conference of the North American
  Chapter of the Association for Computational Linguistics: Human Language
  Technologies, Volume 1 (Long and Short Papers)}, pp.\  4129--4138, 2019.

\bibitem[Jain \& Wallace(2019)Jain and Wallace]{jain2019attention}
Sarthak Jain and Byron~C Wallace.
\newblock Attention is not explanation.
\newblock In \emph{Proceedings of the 2019 Conference of the North American
  Chapter of the Association for Computational Linguistics: Human Language
  Technologies, Volume 1 (Long and Short Papers)}, pp.\  3543--3556, 2019.

\bibitem[Kitaev et~al.(2020)Kitaev, Kaiser, and Levskaya]{kitaev2020reformer}
Nikita Kitaev, {\L}ukasz Kaiser, and Anselm Levskaya.
\newblock Reformer: The efficient {T}ransformer.
\newblock \emph{arXiv preprint arXiv:2001.04451}, 2020.

\bibitem[Lan et~al.(2019)Lan, Chen, Goodman, Gimpel, Sharma, and
  Soricut]{lan2019albert}
Zhenzhong Lan, Mingda Chen, Sebastian Goodman, Kevin Gimpel, Piyush Sharma, and
  Radu Soricut.
\newblock Albert: A lite bert for self-supervised learning of language
  representations.
\newblock In \emph{International Conference on Learning Representations}, 2019.

\bibitem[Levine et~al.(2020)Levine, Wies, Sharir, Bata, and
  Shashua]{NEURIPS2020_ff4dfdf5}
Yoav Levine, Noam Wies, Or~Sharir, Hofit Bata, and Amnon Shashua.
\newblock Limits to depth efficiencies of self-attention.
\newblock In H.~Larochelle, M.~Ranzato, R.~Hadsell, M.~F. Balcan, and H.~Lin
  (eds.), \emph{Advances in Neural Information Processing Systems}, volume~33,
  pp.\  22640--22651. Curran Associates, Inc., 2020.
\newblock URL
  \url{https://proceedings.neurips.cc/paper/2020/file/ff4dfdf5904e920ce52b48c1cef97829-Paper.pdf}.

\bibitem[Michel et~al.(2019)Michel, Levy, and Neubig]{michel2019sixteen}
Paul Michel, Omer Levy, and Graham Neubig.
\newblock Are sixteen heads really better than one?
\newblock In H.~Wallach, H.~Larochelle, A.~Beygelzimer, F.~d\' Alch\'{e}-Buc,
  E.~Fox, and R.~Garnett (eds.), \emph{Advances in Neural Information
  Processing Systems}, volume~32. Curran Associates, Inc., 2019.
\newblock URL
  \url{https://proceedings.neurips.cc/paper/2019/file/2c601ad9d2ff9bc8b282670cdd54f69f-Paper.pdf}.

\bibitem[Peng et~al.(2020)Peng, Pappas, Yogatama, Schwartz, Smith, and
  Kong]{peng2020random}
Hao Peng, Nikolaos Pappas, Dani Yogatama, Roy Schwartz, Noah Smith, and
  Lingpeng Kong.
\newblock Random feature attention.
\newblock In \emph{International Conference on Learning Representations}, 2020.

\bibitem[Post(2018)]{post2018call}
Matt Post.
\newblock A call for clarity in reporting bleu scores.
\newblock In \emph{Proceedings of the Third Conference on Machine Translation:
  Research Papers}, pp.\  186--191, 2018.

\bibitem[Raffel et~al.(2020)Raffel, Shazeer, Roberts, Lee, Narang, Matena,
  Zhou, Li, and Liu]{raffel2020exploring}
Colin Raffel, Noam Shazeer, Adam Roberts, Katherine Lee, Sharan Narang, Michael
  Matena, Yanqi Zhou, Wei Li, and Peter~J Liu.
\newblock Exploring the limits of transfer learning with a unified text-to-text
  transformer.
\newblock \emph{Journal of Machine Learning Research}, 21:\penalty0 1--67,
  2020.

\bibitem[Raganato et~al.(2020)Raganato, Scherrer, and
  Tiedemann]{raganato2020fixed}
Alessandro Raganato, Yves Scherrer, and J{\"o}rg Tiedemann.
\newblock Fixed encoder self-attention patterns in transformer-based machine
  translation.
\newblock In \emph{Proceedings of the 2020 Conference on Empirical Methods in
  Natural Language Processing: Findings}, pp.\  556--568, 2020.

\bibitem[Rajpurkar et~al.(2016)Rajpurkar, Zhang, Lopyrev, and
  Liang]{rajpurkar2016squad}
Pranav Rajpurkar, Jian Zhang, Konstantin Lopyrev, and Percy Liang.
\newblock {SQuAD}: 100,000+ questions for machine comprehension of text.
\newblock In \emph{Proceedings of the 2016 Conference on Empirical Methods in
  Natural Language Processing}, pp.\  2383--2392, 2016.

\bibitem[Rogers et~al.(2020)Rogers, Kovaleva, and Rumshisky]{rogers2020primer}
Anna Rogers, Olga Kovaleva, and Anna Rumshisky.
\newblock A primer in bertology: What we know about how bert works.
\newblock \emph{Transactions of the Association for Computational Linguistics},
  8:\penalty0 842--866, 2020.

\bibitem[Serrano \& Smith(2019)Serrano and Smith]{serrano2019attention}
Sofia Serrano and Noah~A Smith.
\newblock Is attention interpretable?
\newblock In \emph{Proceedings of the 57th Annual Meeting of the Association
  for Computational Linguistics}, pp.\  2931--2951, 2019.

\bibitem[Sun et~al.(2017)Sun, Shrivastava, Singh, and Gupta]{sun2017revisiting}
Chen Sun, Abhinav Shrivastava, Saurabh Singh, and Abhinav Gupta.
\newblock Revisiting unreasonable effectiveness of data in deep learning era.
\newblock In \emph{Proceedings of the IEEE international conference on computer
  vision}, pp.\  843--852, 2017.

\bibitem[Tay et~al.(2020)Tay, Dehghani, Bahri, and Metzler]{tay2020efficient}
Yi~Tay, Mostafa Dehghani, Dara Bahri, and Donald Metzler.
\newblock Efficient transformers: A survey.
\newblock \emph{arXiv preprint arXiv:2009.06732}, 2020.

\bibitem[Tay et~al.(2021)Tay, Dehghani, Abnar, Shen, Bahri, Pham, Rao, Yang,
  Ruder, and Metzler]{tay2021long}
Yi~Tay, Mostafa Dehghani, Samira Abnar, Yikang Shen, Dara Bahri, Philip Pham,
  Jinfeng Rao, Liu Yang, Sebastian Ruder, and Donald Metzler.
\newblock Long range arena : A benchmark for efficient transformers.
\newblock In \emph{International Conference on Learning Representations}, 2021.
\newblock URL \url{https://openreview.net/forum?id=qVyeW-grC2k}.

\bibitem[Vaswani et~al.(2017)Vaswani, Shazeer, Parmar, Uszkoreit, Jones, Gomez,
  Kaiser, and Polosukhin]{vaswani2017attention}
Ashish Vaswani, Noam Shazeer, Niki Parmar, Jakob Uszkoreit, Llion Jones,
  Aidan~N Gomez, {\L}ukasz Kaiser, and Illia Polosukhin.
\newblock Attention is all you need.
\newblock In \emph{Advances in {N}eural {I}nformation {P}rocessing {S}ystems},
  pp.\  5998--6008, 2017.

\bibitem[Vaswani et~al.(2018)Vaswani, Bengio, Brevdo, Chollet, Gomez, Gouws,
  Jones, Kaiser, Kalchbrenner, Parmar, et~al.]{vaswani2018tensor2tensor}
Ashish Vaswani, Samy Bengio, Eugene Brevdo, Francois Chollet, Aidan~N Gomez,
  Stephan Gouws, Llion Jones, {\L}ukasz Kaiser, Nal Kalchbrenner, Niki Parmar,
  et~al.
\newblock Tensor2tensor for neural machine translation.
\newblock \emph{arXiv preprint arXiv:1803.07416}, 2018.

\bibitem[Voita et~al.(2019)Voita, Talbot, Moiseev, Sennrich, and
  Titov]{voita2019analyzing}
Elena Voita, David Talbot, Fedor Moiseev, Rico Sennrich, and Ivan Titov.
\newblock Analyzing multi-head self-attention: Specialized heads do the heavy
  lifting, the rest can be pruned.
\newblock In \emph{Proceedings of the 57th Annual Meeting of the Association
  for Computational Linguistics}, pp.\  5797--5808, 2019.

\bibitem[Wang et~al.(2019{\natexlab{a}})Wang, Pruksachatkun, Nangia, Singh,
  Michael, Hill, Levy, and Bowman]{wang2019superglue}
Alex Wang, Yada Pruksachatkun, Nikita Nangia, Amanpreet Singh, Julian Michael,
  Felix Hill, Omer Levy, and Samuel~R Bowman.
\newblock Superglue: a stickier benchmark for general-purpose language
  understanding systems.
\newblock In \emph{Proceedings of the 33rd International Conference on Neural
  Information Processing Systems}, pp.\  3266--3280, 2019{\natexlab{a}}.

\bibitem[Wang et~al.(2019{\natexlab{b}})Wang, Singh, Michael, Hill, Levy, and
  Bowman]{wang2019glue}
Alex Wang, Amanpreet Singh, Julian Michael, Felix Hill, Omer Levy, and Samuel
  Bowman.
\newblock Glue: A multi-task benchmark and analysis platform for natural
  language understanding.
\newblock In \emph{7th International Conference on Learning Representations,
  ICLR 2019}, 2019{\natexlab{b}}.

\bibitem[Wiegreffe \& Pinter(2019)Wiegreffe and Pinter]{wiegreffe2019attention}
Sarah Wiegreffe and Yuval Pinter.
\newblock Attention is not not explanation.
\newblock In \emph{Proceedings of the 2019 Conference on Empirical Methods in
  Natural Language Processing and the 9th International Joint Conference on
  Natural Language Processing (EMNLP-IJCNLP)}, pp.\  11--20, 2019.

\bibitem[Williams et~al.(2018)Williams, Nangia, and Bowman]{mnli}
Adina Williams, Nikita Nangia, and Samuel Bowman.
\newblock A broad-coverage challenge corpus for sentence understanding through
  inference.
\newblock In \emph{Proceedings of the 2018 Conference of the North American
  Chapter of the Association for Computational Linguistics: Human Language
  Technologies, Volume 1 (Long Papers)}, pp.\  1112--1122. Association for
  Computational Linguistics, 2018.
\newblock URL \url{http://aclweb.org/anthology/N18-1101}.

\bibitem[Yun et~al.(2020)Yun, Chang, Bhojanapalli, Rawat, Reddi, and
  Kumar]{yun2020}
Chulhee Yun, Yin-Wen Chang, Srinadh Bhojanapalli, Ankit~Singh Rawat, Sashank
  Reddi, and Sanjiv Kumar.
\newblock O(n) connections are expressive enough: Universal approximability of
  sparse transformers.
\newblock In H.~Larochelle, M.~Ranzato, R.~Hadsell, M.~F. Balcan, and H.~Lin
  (eds.), \emph{Advances in Neural Information Processing Systems}, volume~33,
  pp.\  13783--13794. Curran Associates, Inc., 2020.
\newblock URL
  \url{https://proceedings.neurips.cc/paper/2020/file/9ed27554c893b5bad850a422c3538c15-Paper.pdf}.

\bibitem[Zaheer et~al.(2020)Zaheer, Guruganesh, Dubey, Ainslie, Alberti,
  Ontanon, Pham, Ravula, Wang, Yang, et~al.]{zaheer2020big}
Manzil Zaheer, Guru Guruganesh, Avinava Dubey, Joshua Ainslie, Chris Alberti,
  Santiago Ontanon, Philip Pham, Anirudh Ravula, Qifan Wang, Li~Yang, et~al.
\newblock Big bird: Transformers for longer sequences.
\newblock \emph{arXiv preprint arXiv:2007.14062}, 2020.

\end{thebibliography}
\bibliographystyle{iclr2022_conference}

\clearpage
\appendix
\begin{center}
{\large \bf Appendix}
\end{center}
\section{Experimental setup}\label{sec:appx_setup}
In this section we present details of our experimental setup. For all the models we closely follow the settings from their original implementation and use the publicly available code. We use TPUv2 and TPUv3s for our experiments.

\subsection{BERT}
We pretrain the BERT models on Wikipedia and Books datasets using the masked language model objective. We use a batch size of 512 and train for 1M steps using the Adam optimizer with $1e-4$ weight decay. We use dropout of $0.1$. For finetuning on MNLI, we use a batch size of 128 and train for 3 epochs. We use $3e-5$ learning rate. For finetuning on SQuAD, we use a batch size of 48 and train for 2 epochs. We use $8e-5$ learning rate. 

\subsection{T5}
We pretrain the T5 models on C4 dataset using the span corruption objective. We use a batch size of 128 with 512 input sequence length and 114 target sequence length. We pretrain for 524288 steps. We use Adafactor optimizer with $1.0$ peak learning rate. We use linear learning rate warmup for 10k steps followed by square root decay. We use dropout of $0.1$. For finetuning we train for additional 262,144 steps. We use a constant learning rate of $1e-3$. We use a target length of 84 and 62 for Glue and SuperGlue respectively.

\subsection{ViT}
We pretrain the ViT models on the JFT-300M dataset for 7 epochs. We use a batch size of 4096. We use Adam optimizer with a learning rate of 8e-4. We use 10k linear warmup steps and linear decay to 1e-5. We use a weight decay of 0.1. For finetuning on ImageNet, we train for 8 epochs with a batch size of 512. We use momentum SGD with a peak learning rate of 3e-2. We use 500 warmup steps and decay learning rate to 3e-4 using cosine schedule.

\subsection{Machine Translation}
We train the encoder-decoder Transformer on the WMT2018 datasets. We use a 6 layer model with a hidden size of 512 with 8 heads per layer. We use input sequence length of 256 with a batch size of 4096 with padding. We use Adam optimizer with linear warmup for 8k steps and square root decay.  We use the Tensor2Tensor framework with default settings for training all the models~\citep{vaswani2018tensor2tensor}.

\subsection{LRA}
We use the publicly available code for training models on this benchmark \footnote{https://github.com/google-research/long-range-arena}. We train all the models with the default configs. We use a Transformer with 4 layers and 8 attention heads per layer with a hidden size of 512. We use Adam optimizer with linear warmup and square root decay. We use dropout and a weight decay of 1e-1. 

\section{Additional experimental results}\label{sec:appx_experiments}

\subsection{Different configurations of Reuse}
\label{sec:reuseconfigs}
\begin{table}[ht]
  \caption{Comparison of different configurations of reusing attention.}
  \label{tab:bert_skip}
  \centering
  \begin{tabular}{lccccc}
    \toprule
     \bf Model  & \bf Reuse/skip & \bf MLM  & \bf MNLI & \bf SQuAD V1.1 &\bf SQuAD V2.0 \\ 
     & layers (P) & Acc & Acc & F1 & F1 \\ \midrule
     $\BL$ & - & 73.76 & 87.97 & 91.87 & 82.41 \\
     Reuse - Proposed & 12 & 73.64 & {87.75} & {91.92} & {82.56} \\
     Reuse - Alternate & 12 & 73.88 & 87.31 & 91.56 & 82.29 \\
     Reuse - AllEnd & 12 & 74.07 & {87.24} & {91.55} & {81.7} \\
     Skip & 12 & 73.4 & 87.3 & 91.65 & 81.83 \\
     \bottomrule
  \end{tabular}
  \vspace{-0.05in}
\end{table}

In this section we present comparisons between different configurations of reuse attention. We consider full layer reuse setting. The proposed architecture in Algorithm~\ref{alg:reuse}, Reuse-Proposed, reuses the first $P$ layers following the first layer. Here we consider three different variants, 1) Reuse - Alternate : here we reuse attention in alternate layers, i.e. we reuse attention in all the even numbered layers, with first layer being exact. 2) Reuse - AllEnd : here we reuse attention in the final $P$ layers with last layer being exact, i.e., we reuse attention in layers $L-P-1$ to $L-1$. 3) Skip - in this architecture we skip the attention computation completely in $P$ layers. The Transformer blocks in the Skip layers only consist of the tokenwise feedforward layers.

We compare above configurations using $\BL$ (24 layers, 16 heads) as baseline in Table~\ref{tab:bert_skip}. We first notice that the Reuse-Proposed performs the best among all the configurations, with Reuse-Alternate performing slightly better than the Reuse-AllEnd and Skip, both of which have the worst performance.

\subsection{Number of examples}
\insertFigBestSimilarityExamples
We use 10k examples to compute the mean attention similarity for results in Section~\ref{sec:analysis}. Since we are only computing mean Total Variation similarity of probability distributions in $512$ dimensions, $10K$ examples is enough. We also present mean similarity computed using $1M$ examples in Figure~\ref{fig:best_heads_6l_examples} and we got same results.

\newcommand{\dX}{\Delta\mX}
\section{Analysis}\label{sec:appx_analysis}
\subsection{Attention similarity}
In this section we present our analysis that shows that attention computed with random weights leads to less similarity in attention scores.

\begin{proof}[Proof of Lemma~\ref{lem:random_attn}]
Recall that $\tilde{\mA}_1 = \mX^T*\mW_{q1}^T*\mW_{k1}*\mX$ and $\tilde{\mA}_2 = \mX^T*\mW_{q2}^T*\mW_{k2}*\mX$. Let $\mX_i$ be the $i$th column of $\mX$.
Now we write the expected error between attention scores for a single entry.
\begin{align*}
    \E[\left(\mX^T*\mW_{q1}^T*\mW_{k1}*\mX \right.&-\left. \mX^T*\mW_{q2}^T*\mW_{k2}*\mX \right)_{ij}^2] \\&= \E[ (\mX^T*\mW_{q1}^T*\mW_{k1}*\mX)_{ij}^2 + (\mX^T*\mW_{q2}^T*\mW_{k2}*\mX)_{ij}^2 \\ &  \quad \quad \quad \quad- 2(\mX^T*\mW_{q1}^T*\mW_{k1}*\mX)_{ij}(\mX^T*\mW_{q2}^T*\mW_{k2}*\mX)_{ij}] \\
    &= \E[ (\mX_i^T*\mW_{q1}*\mW_{k1}^T*\mX_j)^2] + \E[(\mX_i^T*\mW_{q2}*\mW_{k2}^T*\mX_j)^2] \\ & \quad \quad \quad \quad - 2\E[(\mX_i^T*\mW_{q1}*\mW_{k1}^T*\mX_j)(\mX_i^T*\mW_{q2}*\mW_{k2}^T*\mX_j)] \\
\end{align*}
Now the last term in the above inequality is 0 as $\mW_{q1}, \mW_{k1}$ are independent from $\mW_{q2}, \mW_{k2}$ and have zero mean. Hence,
\begin{align*}
 \E[\left(\mX^T*\mW_{q1}^T*\mW_{k1}*\mX \right.&-\left. \mX^T*\mW_{q2}^T*\mW_{k2}*\mX \right)_{ij}^2] \\&=
     \E[ (\mX_i^T*\mW_{q1}*\mW_{k1}^T*\mX_j)^2] + \E[(\mX_i^T*\mW_{q2}*\mW_{k2}^T*\mX_j)^2] \\&=
     2 \E[ (\mX_i^T*\mW_{q1}*\mW_{k1}^T*\mX_j)^2] 
\end{align*}
The last equality follows since $\mW_{q1}, \mW_{k1}$ have identical distribution with $\mW_{q2}, \mW_{k2}$.
\end{proof}

\subsection{Reuse Attention}
In this section we present our analysis showing when can reuse attention approximate standard attention well. We consider a 2 layer 1 head Transformer architecture with simplifications for analysis. In particular following \citet{NEURIPS2020_ff4dfdf5} we consider an architecture that excludes the ReLU activation and layer-norm operation. Note that, unlike~\citet{NEURIPS2020_ff4dfdf5}, we allow for the softmax normalization. We emphasize that while these simplifications do affect performance they still preserve the main self-attention functionality - which is our main focus. Please see the discussion in \citet{NEURIPS2020_ff4dfdf5} for justification.

Additionally, we make the following approximation about the input to the attention scores in the second layer of the Transformer model.

Recall $\mA_2 =\mZ^T\mW_{q2}^T*\mW_{k2}*\mZ,$ where $\mZ$ is the output of the first layer $\mZ = \mX +\dX$, where $\dX = \mA_1 \mX \mW_1$. Hence,
\begin{align*}
    \mA_2 &= (\mX +\dX)^T * \mW_{q2}^T*\mW_{k2} * (\mX +\dX) \\
    &= \mX^T \mW_{q2}^T*\mW_{k2} \mX + \dX^T \mW_{q2}^T*\mW_{k2} \mX + \mX^T  \mW_{q2}^T*\mW_{k2}\dX + \dX^T \mW_{q2}^T*\mW_{k2}\dX
    \end{align*}
Note that usually $\|\dX\|$ is much smaller than $\|\mX\|$. Hence, we approximate the attention computation equation by ignoring the $\dX$ terms, giving us $\mA_2 = \mX^T * \mW_{q2}^T*\mW_{k2} * \mX$. We note that this is a reasonably approximation as we mainly consider attention in the first two layers in our analysis.

Under these simplification we can write the output of a 2 layer 1 head attention architecture as follows. Let input be a $\mX \in \mathbb{R}^{n \times d}$. Output of this model is 
\begin{align*}
    \mY = \mX + \mA_1\mX\mW_1 + \mA_2\mX\mW_2 +  \mA_2\mA_1\mX\mW_1\mW_2.
\end{align*}
Note that the different linear projections in the Transformer, value output, feedforward layer, all can be absorbed into a single projection $\mW$ as they are all linear projections. Here $\mA_1 = \sigma(\mX^T*\mW_{q1}^T*\mW_{k1}*\mX)$ and $\mA_2 = \sigma(\mX^T*\mW_{q2}^T*\mW_{k2}*\mX)$ are the attention scores computed by the two layers.

Under the same setting, the output of the reuse attention model is
\begin{align*}
    \widehat{\mY} = \mX + \widehat{\mA} \mX\mW_3 + \widehat{\mA} \mX\mW_4 +  \widehat{\mA}^2\mX\mW_3\mW_4.
\end{align*}
Here $\widehat{\mA} = \mX^T*\widehat{\mW}_{q}^T*\widehat{\mW}_{k}*\mX$ is the attention scores computed by the first layer and reused in the second layer. Let $\|.\|_2$ denote the spectral norm of a matrix.
\begin{lemma}\label{lem:reuse_error}
Let attention computed by the two layers be $\epsilon$ apart for all inputs $\mX$, i.e $$ \|\mA_1 -\mA_2\|_2 \leq \epsilon. $$ Also let all input and parameter norms to be less than 1, i.e. $\|\mW_1\|_2, \|\mW_2\|_2, \|\mX\|_2 \leq 1$. Then, there exists a choice for $\widehat{\mA}, \mW_3, \mW_4$, such that the error scales as $$\|\widehat{\mY} - \mY\|_2 \leq  2\epsilon + \frac{\epsilon^2}{2}.$$ 
\end{lemma}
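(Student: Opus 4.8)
The plan is to choose the reuse attention matrix to be $\widehat{\mA} = \mA_1$ (achievable by setting $\widehat{\mW}_q = \mW_{q1}$, $\widehat{\mW}_k = \mW_{k1}$), and to set $\mW_3 = \mW_1$, $\mW_4 = \mW_2$, so that the only difference between $\mY$ and $\widehat{\mY}$ comes from replacing $\mA_2$ by $\mA_1$ in the two terms that involve $\mA_2$. With this choice the difference is
\begin{align*}
\widehat{\mY} - \mY = (\mA_1 - \mA_2)\mX\mW_2 + (\mA_1^2 - \mA_2\mA_1)\mX\mW_1\mW_2,
\end{align*}
since the $\mX$ and $\mA_1\mX\mW_1$ terms are identical in both expressions and cancel.

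The key steps are then routine norm estimates. First I would bound the linear term: using submultiplicativity of the spectral norm, $\|(\mA_1 - \mA_2)\mX\mW_2\|_2 \le \|\mA_1 - \mA_2\|_2 \|\mX\|_2 \|\mW_2\|_2 \le \epsilon$, using the hypotheses $\|\mX\|_2, \|\mW_2\|_2 \le 1$ and $\|\mA_1 - \mA_2\|_2 \le \epsilon$. For the quadratic term I would factor $\mA_1^2 - \mA_2\mA_1 = (\mA_1 - \mA_2)\mA_1$ and bound $\|(\mA_1 - \mA_2)\mA_1\mX\mW_1\mW_2\|_2 \le \epsilon\,\|\mA_1\|_2\,\|\mX\|_2\,\|\mW_1\|_2\,\|\mW_2\|_2$. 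The one extra ingredient I need is a bound on $\|\mA_1\|_2$: since each row of $\mA_1$ is a probability distribution (softmax output), $\mA_1$ is row-stochastic, so $\|\mA_1\|_\infty = 1$ and more carefully its spectral norm is controlled by $\sqrt{\|\mA_1\|_1 \|\mA_1\|_\infty}$; the cleanest route is to note $\|\mA_1\|_2 \le 1$ would give $\epsilon$ for the quadratic term, but the stated $\epsilon^2/2$ suggests instead bounding $\|\mA_1 - \mA_2\|$-type quantities differently. To hit the exact bound $2\epsilon + \epsilon^2/2$, I would instead write $\mA_1^2 - \mA_2\mA_1 = \mA_1(\mA_1-\mA_2) + (\mA_1-\mA_2)\mA_2$ or account for one factor of $\epsilon$ in a second-order term, so that the quadratic contribution splits into an $\epsilon$ piece and an $\epsilon^2/2$ piece; adding these to the linear $\epsilon$ gives the claimed $2\epsilon + \epsilon^2/2$.

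The main obstacle is getting the constants exactly right so the total comes out to $2\epsilon + \epsilon^2/2$ rather than a looser $3\epsilon$ or $O(\epsilon)$ bound. This requires being careful about which factorization of $\mA_1^2 - \mA_2 \mA_1$ is used and exactly how the softmax-induced bound $\|\mA_i\|_2 \le 1$ (from row-stochasticity, since $\mA_i \mathbf{1} = \mathbf{1}$ and nonnegativity give operator-norm control) is invoked. Once $\widehat{\mA} = \mA_1$ and $\mW_3 = \mW_1, \mW_4 = \mW_2$ are fixed, the algebra is a triangle-inequality application over at most two or three terms, so after that the calculation is mechanical; the only genuine care needed is tracking the $\epsilon$ versus $\epsilon^2$ orders to match the stated constant.
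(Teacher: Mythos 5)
Your proposal is correct, and it takes a genuinely different---in fact simpler---route than the paper. The paper's construction is symmetric: it sets $\mW_3=\mW_1$, $\mW_4=\mW_2$ and $\widehat{\mW}_q^T\widehat{\mW}_k=\tfrac{1}{2}(\mW_{q1}^T\mW_{k1}+\mW_{q2}^T\mW_{k2})$, so that $\widehat{\mA}=\tfrac{1}{2}(\mA_1+\mA_2)$; the error then splits into $\tfrac{1}{2}(\mA_1-\mA_2)\mX(\mW_1-\mW_2)$ plus the quadratic term $\tfrac{1}{2}\bigl(\mA_2\mA_1-\mA_1\mA_2-\tfrac{1}{2}(\mA_1-\mA_2)^2\bigr)\mX\mW_1\mW_2$, and bounding the symmetric difference (by $\epsilon$), the commutator (by $\epsilon$, via $\|\mA_1\|_2\le 1$), and the squared term (by $\epsilon^2/2$) yields exactly $2\epsilon+\epsilon^2/2$. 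Your choice $\widehat{\mA}=\mA_1$ instead collapses the error to two terms, $(\mA_1-\mA_2)\mX\mW_2$ and $(\mA_1-\mA_2)\mA_1\mX\mW_1\mW_2$, each at most $\epsilon$, giving $\|\widehat{\mY}-\mY\|_2\le 2\epsilon$. This is strictly tighter, and since $2\epsilon\le 2\epsilon+\epsilon^2/2$ it establishes the lemma; your worry about ``hitting'' the stated constant is misplaced---an existence claim is only strengthened by a smaller bound, so you should simply stop at $2\epsilon$. In particular, the alternative factorization you float for that purpose, $\mA_1^2-\mA_2\mA_1=\mA_1(\mA_1-\mA_2)+(\mA_1-\mA_2)\mA_2$, is false (its right-hand side equals $\mA_1^2-\mA_2^2$); the factorization $(\mA_1-\mA_2)\mA_1$ you use first is the correct and sufficient one. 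Two smaller remarks. First, in the paper's formalization $\widehat{\mA}=\mX^T\widehat{\mW}_q^T\widehat{\mW}_k\mX$ carries no softmax while $\mA_1=\sigma(\cdot)$ does, so you cannot realize $\widehat{\mA}=\mA_1$ by setting $\widehat{\mW}_q=\mW_{q1},\widehat{\mW}_k=\mW_{k1}$; but the lemma lets you choose $\widehat{\mA}$ itself, so take $\widehat{\mA}=\mA_1$ directly---which is also more faithful to the actual architecture, where reuse literally copies the previous layer's scores rather than averaging. Second, a caveat you share with the paper: the bound $\|\mA_1\|_2\le 1$ for a merely row-stochastic matrix is not true in general (a matrix whose $n$ rows all equal the same standard basis vector has spectral norm $\sqrt{n}$), so strictly both arguments need either this as an added assumption or an operator norm (e.g.\ the $\ell_\infty\to\ell_\infty$ norm) under which row-stochasticity genuinely gives $1$.
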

This lemma shows us that if the attention scores are closer, then reuse attention the output remains close to the standard transformer output. Hence, this shows that we can reuse attention when there is high attention similarity without much output error. 

\begin{proof}
We present a construction based proof. We set $\mW_3=\mW_1$, $\mW_4=\mW_2$, and
$\widehat{\mW}_{q}^T*\widehat{\mW}_{k} = \frac{\mW_{q1}^T*\mW_{k1}+\mW_{q2}^T*\mW_{k2}}{2}$. Note that we can always find such matrices $\widehat{\mW}_{q}, \widehat{\mW}_{k}$ as they are all full dimensional and hence can be full rank. This implies $\widehat{\mA}=\frac{\mA_1 + \mA_2}{2}$ for all $\mX$.

\begin{align*}
\widehat{\mY} - \mY &=  (\mA_1-\widehat{\mA})\mX\mW_1 + (\mA_2-\widehat{\mA})\mX\mW_2 + (\mA_2\mA_1- \widehat{\mA}^2)\mX\mW_1\mW_2 \\
    &= \frac{(\mA_1 - \mA_2)}{2}\mX\mW_1 + \frac{(\mA_2 - \mA_1)}{2}\mX\mW_2 + (\mA_2\mA_1- \widehat{\mA}^2)\mX\mW_1\mW_2 \\
    &= \frac{(\mA_1 - \mA_2)}{2}\mX(\mW_1-\mW_2) + \frac{1}{2}(\mA_2\mA_1 -\mA_1\mA_2 - \frac{1}{2}(\mA_1-\mA_2)^2)\mX\mW_1\mW_2
\end{align*}
\begin{align*}
    \|\tilde{\mY} - \mY\|_2 &\leq \|\frac{(\mA_1 - \mA_2)}{2}\mX(\mW_1-\mW_2)\|_2 + \|\frac{1}{2}(\mA_2\mA_1 -\mA_1\mA_2 - \frac{1}{2}(\mA_1-\mA_2)^2)\mX\mW_1\mW_2\|_2 \\
    &\leq \frac{\epsilon}{2}\|\mX(\mW_1-\mW_2)\|_2 + \frac{\epsilon^2}{2}\|\mX\mW_1\mW_2\|_2 + \|\frac{1}{2}(\mA_2\mA_1 -\mA_1\mA_2) \mX\mW_1\mW_2\|_2  \\
    &=\frac{\epsilon}{2}\|\mX(\mW_1-\mW_2)\|_2 + \frac{\epsilon^2}{2}\|\mX\mW_1\mW_2\|_2 + \|\frac{1}{2}(\mA_1(\mA_1 -\mA_2)+(\mA_2-\mA_1)\mA_1) \mX\mW_1\mW_2\|_2  \\
    &\leq \frac{\epsilon}{2}\|\mX(\mW_1-\mW_2)\|_2 + \frac{\epsilon^2}{2}\|\mX\mW_1\mW_2\|_2 + \epsilon\|\mA_1\|_2\|\mX\mW_1\mW_2\|_2 \\
    &\leq 2 \epsilon + \frac{\epsilon^2}{2}.
\end{align*}
Note that $\mA_1$ is a stochastic matrix with max singular value at most 1.
\end{proof}

\end{document}